\documentclass[conference]{IEEEtran}

\usepackage[utf8]{inputenc}
\usepackage{amsmath}
\usepackage{amsthm}
\usepackage{amssymb}
\usepackage{tikz}
\usepackage[noend]{algpseudocode}
\usepackage[ruled]{algorithm}
\usepackage{todonotes}
\usepackage{multirow}
\usepackage[export]{adjustbox}
\usepackage[multiple]{footmisc}
\usepackage{url}

\newtheorem{theorem}{Theorem}
\newtheorem{lemma}{Lemma}

\theoremstyle{definition}
\newtheorem{definition}{Definition}

\newcommand{\feats}{\mathcal{X}}
\newcommand{\R}{\mathbb{R}}
\newcommand{\N}{\mathbb{N}}
\newcommand{\labels}{\mathcal{Y}}
\newcommand{\dtrain}{\mathcal{D}_{\textit{train}}}
\newcommand{\dtest}{\mathcal{D}_{\textit{test}}}

\newcommand{\iatk}{I^{\textit{atk}}}
\newcommand{\pre}{\textit{pre}}
\newcommand{\post}{\textit{post}}
\newcommand{\paid}{\textit{cost}}
\newcommand{\ipre}{I^{\pre}}
\newcommand{\ipost}{I^{\post}}
\newcommand{\jpre}{J^{\textit{pre}}}
\newcommand{\jpost}{J^{\textit{post}}}
\newcommand{\symatk}[3]{\langle #1 \rangle \rhd \langle #2 \rangle_{#3}}
\newcommand{\sym}{\textit{sym}}

\newcommand{\etal}{\textit{et al.}}

\newcommand{\empirical}[1]{#1}
\newcommand{\revise}[1]{#1}
\newcommand{\revisee}[1]{#1}
\newcommand{\cla}[1]{#1}

\title{Beyond Robustness: Resilience Verification of Tree-Based Classifiers}

\author{
    \IEEEauthorblockN{
    Stefano Calzavara\IEEEauthorrefmark{1}\textsuperscript{1},
    Lorenzo Cazzaro\IEEEauthorrefmark{1}\textsuperscript{1},
    Claudio Lucchese\IEEEauthorrefmark{1}\textsuperscript{1},
    Federico Marcuzzi\IEEEauthorrefmark{1}\textsuperscript{1} and
    Salvatore Orlando\IEEEauthorrefmark{1}\textsuperscript{1}}
    \IEEEauthorblockA{\IEEEauthorrefmark{1}Department of Environmental Sciences, Informatics and Statistics,\\ Ca’ Foscari University of Venice, Italy.\\
    Email: \{name.surname\}@unive.it\\
    }
}

\begin{document}

\maketitle

\begingroup\renewcommand\thefootnote{1}
\footnotetext{Equal contribution}

\begin{abstract}
In this paper we criticize the robustness measure traditionally employed to assess the performance of machine learning models deployed in adversarial settings. To mitigate the limitations of robustness, we introduce a new measure called resilience and we focus on its verification. In particular, we discuss how resilience can be verified by combining a traditional robustness verification technique with a data-independent stability analysis, which identifies a subset of the feature space where the model does not change its predictions despite adversarial manipulations. We then introduce a formally sound data-independent stability analysis for decision trees and decision tree ensembles, which we experimentally assess on public datasets and we leverage for resilience verification. Our results show that resilience verification is useful and feasible in practice, yielding a more reliable security assessment of both standard and robust decision tree models.
\end{abstract}

\section{Introduction}
Machine Learning (ML) is becoming more and more popular nowadays, in particular for classification tasks, yet it is acknowledged to be susceptible to different types of attacks. A number of research papers showed that classifiers trained using standard ML algorithms cannot be deployed in security-sensitive settings, because they are easily fooled in practice and their performance undergoes significant downgrade when their inputs are subject to adversarial manipulations, imperceptible to human experts~\cite{SzegedyZSBEGF13,GoodfellowSS14}. This motivated the development of new performance measures like \emph{robustness}, which generalize traditional measures like accuracy to account for the threats of adversarial manipulations at test time~\cite{MadryMSTV18,RanzatoZ20}. Specifically, given an input $\vec{x}$ and its correct class $y$, robustness requires the classifier to predict the class $y$ also for all the adversarial manipulations $A(\vec{x})$, rather than just for the original input $\vec{x}$. 

Robustness is certainly an intuitive and desirable property to estimate the performance of classifiers deployed in adversarial settings, yet it is sub-optimal because it is strongly dependent on the choice of a specific input $\vec{x}$. While the performance of classifiers must indeed be empirically estimated on a set of correctly labeled inputs (test set), such inputs are normally assumed to be sampled from an underlying data distribution and robustness tells nothing about unsampled data. In other words, a robustness proof for $\vec{x}$ does not provide any guarantee about any other input $\vec{z}$ close to $\vec{x}$ which could have been sampled in place of it. This is concerning, because the actual inputs of the classifier at test time will be different samples drawn from the same distribution of $\vec{x}$, which are not covered by standard security assessments based on robustness. \revise{This problem has been independently acknowledged in very recent work on \emph{global robustness}~\cite{Chen0QLJW21,LeinoWF21}, which advocates the need for verification techniques establishing robustness guarantees on all the possible inputs provided to the classifier. Our work broadly falls in the same research line and shares similarities with existing efforts (cf. Section~\ref{sec:related} for a comparison), yet it takes a different direction.}

\revise{In particular, we here propose a generalization of robustness, called \emph{resilience}, designed to make the security assessment of classifiers more reliable. Resilience generalizes traditional robustness guarantees from a specific test set to all the other possible test sets which could have been sampled in place of it, i.e., which are close to it given an appropriate definition of neighborhood. Resilience thus provides a more conservative account of the security of classifiers than robustness, while retaining its intuitive flavour. Most importantly, the connection between resilience and robustness allows one to leverage traditional tools for robustness verification as the first step of a resilience verification pipeline, thus integrating with significant research efforts spent on robustness verification.}

\paragraph*{Contributions}
In the present paper we make the following contributions:
\begin{enumerate}
    \item We criticize the traditional robustness measure used to estimate the security of classifiers against evasion attacks and we propose an improved measure called resilience. We then discuss how resilience can be estimated by combining an arbitrary robustness verification technique with a \emph{data-independent stability analysis}, which identifies a subset of the feature space where the classifier does not change its predictions despite adversarial manipulations at test time. \revise{The analysis is data-independent because it is based on the classifier alone, rather than on a specific test set (contrary to robustness). We finally present a simple technique to turn any classifier into a globally robust classifier, in the sense of~\cite{LeinoWF21}, by leveraging such data-independent stability analysis} (Section~\ref{sec:resilience}).
    
    \item We propose a data-independent stability analysis for decision trees and decision tree ensembles, a popular class of ML models~\cite{quinlan1986induction}. \revise{The stability analysis is based on \emph{symbolic attacks}, i.e., symbolic representations of a set of instances along with their (relevant) adversarial manipulations, which support the analysis of tree-based classifiers independently of a specific test set.} Our analysis is proved sound and can be readily leveraged to establish both robustness and resilience proofs for tree-based classifiers (Section~\ref{sec:analysis}).
    
    \item We implement our data-independent stability analysis\footnote{We will release our analyzer as open-source upon paper acceptance.} and we experimentally assess its effectiveness on public datasets, by estimating the robustness and resilience of both standard and robust tree models trained using a state-of-the-art adversarial ML algorithm (Section~\ref{sec:experiments}).
\end{enumerate}

Our experimental evaluation shows that resilience verification is both useful and feasible in practice, yielding a more reliable security assessment of classifiers deployed in adversarial settings. \revise{In particular, our experiments show that robustness can be significantly affected by the choice of a specific test set, hence it may give a false sense of security, while resilience is effective at discriminating between secure models and models which turned out to be robust just \emph{by accident}, i.e., thanks to a lucky, specific sampling of the test set. We thus recommend the use of resilience for the security verification of ML models deployed in adversarial scenarios.}

\section{Background}
We introduce here the key technical ingredients required to appreciate this work.

\subsection{Decision Trees for Classification}
Let $\feats \subseteq \R^d$ be a $d$-dimensional vector space of real-valued \textit{features}. An \emph{instance} $\vec{x} \in \feats$ is a $d$-dimensional feature vector $\langle x_1, x_2, \ldots, x_d \rangle$ representing an object in the vector space $\feats$. Each instance is assigned a class label $y \in \labels$ by an unknown \emph{target} function $f: \feats \mapsto \labels$. % For simplicity, we assume $\labels = \{-1, +1\}$ since multi-class classification can be reduced to the binary case, e.g., using a one-vs-rest strategy.

Supervised learning algorithms automatically learn a \emph{classifier} $g: \feats \mapsto \labels$ from a \emph{training set} of correctly labeled instances $\dtrain = \{(\vec{x}_i,f(\vec{x}_i))\}_i$, with the goal of approximating the target function $f$ as accurately as possible. The performance of classifiers is estimated on a \emph{test set} of correctly labeled instances $\dtest = \{(\vec{z}_i,f(\vec{z}_i))\}_i$, normally disjoint from the training set. For example, the \emph{accuracy} measure $a$ counts the percentage of correct predictions out of all the predictions performed on the test set:
\[
a(g,\dtest) = \dfrac{|\{(\vec{z}_i,f(\vec{z}_i)) \in \dtest ~|~ g(\vec{z}_i) = f(\vec{z}_i)\}|}{|\dtest|}
\]

Classifiers can take different shapes, drawn from different sets of hypotheses: in this work, we focus on traditional \emph{binary decision trees}, which are the most common and popular version of such models.

Decision trees can be inductively defined as follows: a decision tree $t$ is either a leaf $\lambda(y)$ for some label $y \in \labels$ or an internal node $\sigma(f,v,t_l,t_r)$, where $f \in \{1,\ldots,d\}$ identifies a feature, $v \in \R$ is a threshold for the feature, and $t_l,t_r$ are decision trees (left and right respectively). At test time, the instance $\vec{x}$ traverses the tree $t$ until it reaches a leaf $\lambda(y)$, which returns the prediction $y$, denoted by $t(\vec{x}) = y$. Specifically, for each traversed tree node $\sigma(f,v,t_l,t_r)$, $\vec{x}$ falls into the left sub-tree $t_l$ if $x_f \leq v$, and into the right sub-tree $t_r$ otherwise. Figure~\ref{fig:tree} represents an example decision tree of depth 2, which assigns the label $+1$ to the instance $\langle 12,7 \rangle$ and the label $-1$ to the instance $\langle 8,6 \rangle$. 

Decision trees are normally combined into an \emph{ensemble} $T = \{t_1,\ldots,t_n\}$: in this case, the ensemble prediction $T(\vec{x})$ is computed by combining together the individual tree predictions $t_i(\vec{x})$, e.g., by performing majority voting on the individually predicted classes. % In this paper we assume the use of the majority voting strategy, which is common for ensembles trained using the popular Random Forest algorithm, leaving other strategies designed for regression trees to future work.

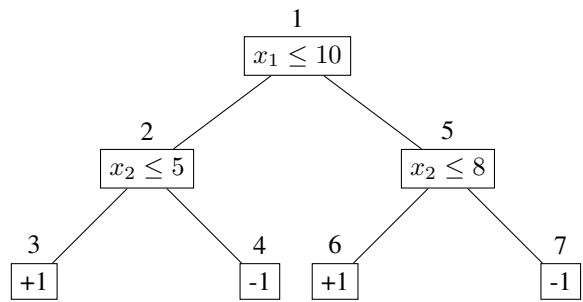
\begin{figure}[t]
\centering
\begin{tikzpicture}[level 1/.style={sibling distance=4cm},level 2/.style={sibling distance=3cm}]
\tikzstyle{every node}=[rectangle,draw]
\node[label=1]{$x_1 \leq 10$}
	child { node[label=2] {$x_2 \leq 5$}
	        child { node[label=3] {+1}}
	        child { node[label=4] {-1}} }
	child { node[label=5] {$x_2 \leq 8$}
		    child { node[label=6] {+1}}
	        child { node[label=7] {-1}} }
;
\end{tikzpicture}
\caption{Example of decision tree}
\label{fig:tree}
\end{figure}

\subsection{Stability and Robustness}
Stability and robustness are classic definitions considered in prior literature on adversarial ML~\cite{RanzatoZ20}. They are used to reason about the security of classifiers against \emph{evasion attacks}, i.e., malicious manipulations of instances at test time aimed at forcing mispredictions. For example, an evasion attack might slightly modify malware to make it look like benign software to the classifier. We write $A(\vec{x})$ to represent the set of all the adversarial manipulations of the instance $\vec{x}$, corresponding to the possible evasion attack attempts against $\vec{x}$. Stability requires the classifier to make the same prediction for all the possible evasion attack attempts.

\begin{definition}[Stability]
The classifier $g$ is \emph{stable} on the instance $\vec{x}$ if and only if, for every adversarial manipulation $\vec{z} \in A(\vec{x})$, we have $g(\vec{z}) = g(\vec{x})$.
\end{definition}

Stability is useful for the security certification of classifiers: if the classifier $g$ is stable on the instance $\vec{x}$, no adversarial manipulation $\vec{z} \in A(\vec{x})$ can be assigned a label different from the classifier prediction $g(\vec{x})$, hence no evasion attack is possible. However, stability does not capture whether a classifier is useful in practice: for example, a trivial classifier which always predicts a constant class is stable on all instances. Hence, the actual property of interest for classifiers deployed in adversarial settings is robustness, which additionally requires the classifier to perform correct predictions.

\begin{definition}[Robustness]
The classifier $g$ is \emph{robust} on the instance $\vec{x}$ if and only if $g(\vec{x}) = f(\vec{x})$ and $g$ is stable on $\vec{x}$.
\end{definition}

Like accuracy, robustness is also normally quantified over a test set $\dtest = \{(\vec{z}_i,f(\vec{z}_i))\}_i$. Formally, we define the robustness measure $r$ as follows:
\[
r(g,\dtest) = \dfrac{|\{(\vec{z}_i,f(\vec{z}_i)) \in \dtest ~|~ g \textnormal{ is robust on } \vec{z}_i\}|}{|\dtest|}
\]

Note that robustness represents a lower bound to accuracy, i.e., $r(g,\dtest) \leq a(g,\dtest)$ for every $g$ and $\dtest$.

\section{Resilience}
\label{sec:resilience}

We now discuss important shortcomings in the traditional robustness measure and we propose a generalization of robustness, called \emph{resilience}, which is designed to mitigate those. We then explain how resilience can be verified in practice and we further elaborate on its design by discussing its connections with a recent definition of global robustness~\cite{LeinoWF21}.

\begin{figure*}[t]
    \centering
    \adjincludegraphics[width=16cm,trim={0 150 0 0},clip]{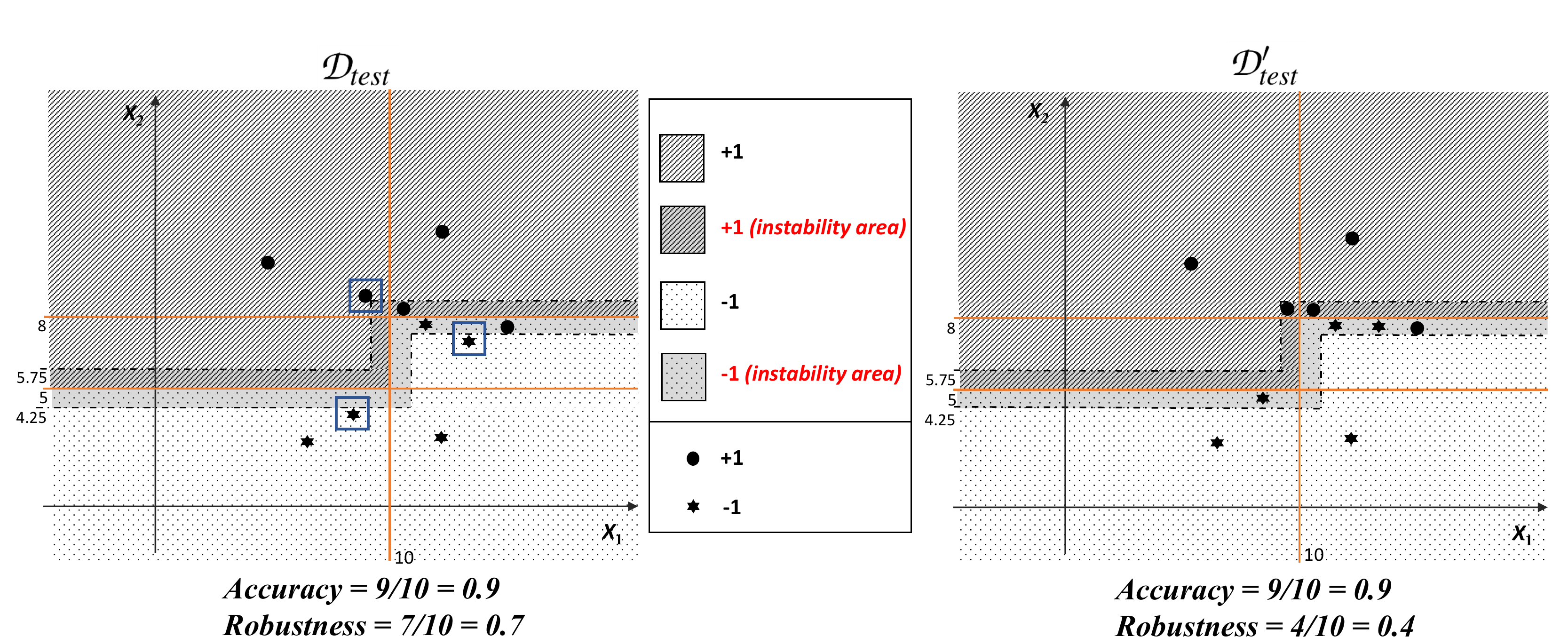}
    \caption{Robustness is not robust: slightly different test sets lead to very different values of robustness}
    \label{fig:robustness}
\end{figure*}

\subsection{Robustness vs. Resilience}
A key problem of robustness is its strong \emph{data-dependence}, i.e., robustness is quantified on a specific test set $\dtest$. Hence, it is possible that even tiny differences between two test sets $\dtest,\dtest'$ might lead to quite different values of robustness. This might give a false sense of security, because a classifier having 0.7 robustness on $\dtest$ might only have 0.4 robustness on $\dtest'$, although both $\dtest$ and $\dtest'$ are representative of the \revisee{same data} distribution of the test instances. We show this problem of robustness in Figure~\ref{fig:robustness} for the same tree classifier of Figure~\ref{fig:tree}, in a two-dimensional space. We assume here an $L_1$-norm attacker such that $A(\vec{x}) = \{\vec{z} \in \feats ~|~ \Vert \vec{z} - \vec{x} \Vert_1 \leq 0.75\}$, leading to the highlighted \textit{instability areas} around the decision boundaries of the tree. \revise{The figure shows that the same classifier provides very different robustness measures on the two close test sets $\dtest$ and $\dtest'$, because more instances of $\dtest'$ fall in the instability area. Hence, the adoption of $\dtest$ over $\dtest'$ for robustness computation might give a false sense of security.}

To mitigate this problem of robustness, we propose \emph{resilience}, a new security measure that explicitly assumes that test instances are sampled from a given data distribution, hence each instance $\vec{x}$ is just a possible sample drawn from a set of neighbours $N(\vec{x})$. 
%We do not assume any specific property of $N$, except the obvious property that for all $\vec{x} \in \feats$ we have $\vec{x} \in N(\vec{x})$. 
\revise{For example, $N(\vec{x})$ may contain all the instances which are within a maximum $L_{\infty}$-distance from $\vec{x}$, as we assume in our experiments.} 
\revisee{In the example of Figure~\ref{fig:robustness}, we define $N(\vec{x}) = \{\vec{z} \in \feats ~|~ \Vert \vec{z} - \vec{x} \Vert_\infty \leq 0.50\}$ and the neighborhood of $\vec{x}$ is thus graphically represented by a small box around $\vec{x}$. Indeed, the figure shows these boxes only for the instances of $\dtest$ whose neighborhood $N(\vec{x})$ intersects the instability areas of the tree and, therefore, other instances in their neighborhood might suffer from evasion attacks.}

Resilience avoids the shortcomings of robustness illustrated in Figure~\ref{fig:robustness}, as it generalizes the idea of robustness to all the test sets which could have been sampled within neighborhoods of the original test set. Formally, resilience requires the classifier to be robust on test instances, while remaining stable on their neighborhoods, for which the correct class labels are unknown.

\begin{definition}[Resilience]
The classifier $g$ is \emph{resilient} on the instance $\vec{x}$ if and only if $g$ is robust on $\vec{x}$ and $g$ is stable on all the instances $\vec{z} \in N(\vec{x})$.
\end{definition}

Back to our example, the resilience of both the test sets of Figure~\ref{fig:robustness} turns out to be $0.4$, i.e., the measured robustness of $\dtest'$, the unlucky test set shown in the right part of the figure. 

Note that resilience generalizes beyond the test set by means of the $N(\vec{x})$ component, which extends the stability guarantees of robustness to an uncountable set of neighbours not included in the test set. Still, similarly to robustness, we can quantify resilience on a test set of correctly labelled instances like in traditional ML pipelines by defining a resilience measure $R$ as follows:
\[
R(g,\dtest) = \dfrac{|\{(\vec{z}_i,f(\vec{z}_i)) \in \dtest ~|~ g \textnormal{ is resilient on } \vec{z}\}|}{|\dtest|}
\]

Observe that resilience provides a lower bound to robustness as claimed, i.e., $R(g,\dtest) \leq r(g,\dtest)$ for every $g$ and $\dtest$.

\subsection{Resilience Verification}
\label{sec:resilience-verification}
Traditional robustness verification approaches cannot be readily applied to prove resilience. \revise{In particular, note that robustness verification takes as input an instance $\vec{x}$ and attempts to assess its stability, while resilience verification requires proving stability for an uncountable set of instances $N(\vec{x})$.} 

Nevertheless, it is possible to estimate resilience by combining robustness verification with a \emph{data-independent} stability analysis. In particular, assume one has a technique to identify the set $X_S = \{\vec{x} \in \feats ~|~ g \textnormal{ is stable on } \vec{x}\}$, then $g$ is resilient on $\vec{x}$ if and only if $g$ is robust on $\vec{x}$ and $N(\vec{x}) \subseteq X_S$.
\revisee{Note that $X_S$ in  Figure~\ref{fig:robustness} corresponds to the whole area except the instability ones.}
Hence, resilience verification reduces to robustness verification with the additional condition $N(\vec{x}) \subseteq X_S$\revisee{, i.e., we also have to check that $N(\vec{x})$ does not intersect the instability area}. \revise{This allows one to take advantage of existing robustness verifiers also to quantify resilience, provided that the stable part of the feature space $X_S$ has been computed.}
\cla{Note that computing $X_S$ is not trivial, in particular for tree ensembles, because a given instance traverses the ensemble reaching a set of leaves, and any of such reachable set of leaves corresponds to a sub-space which should be evaluated for inclusion in $X_S$. Clearly, the number of these sub-spaces grows exponentially with the number of trees in the ensemble. In order to compute $X_S$, and therefore compute resilience, we resort to an approximated approach: computing a tractable under-approximation of $X_S$ for decision trees and decision tree ensembles is a key contribution of the present paper.}

%The reasons why we focus on decision trees and decision tree ensembles are twofold. First, these models operate by means of thresholds, hence performing a data-independent stability analysis is more natural than for other models like deep neural networks. Second, there already exist several effective robustness verification frameworks for these classes of models, e.g.,~\cite{ChenZS0BH19,RanzatoZ20,CalzavaraFL20}, hence the proposed resilience verification technique may yield immediate advantages thanks to prior research.

In particular, the static analysis in Section~\ref{sec:analysis} allows one to compute a set $X_S' \subseteq X_S$, i.e., a sound under-approximation of the portion of the feature space where a tree-based classifier is stable. \cla{The proposed computation of $X_S'$ depends on the classifier alone and not on the test data at hand.}
\cla{In fact, we can exploit the generality of $X_S'$ to efficiently compute lower bounds for the robustness and resilience measures as follows.}

\cla{Given that if $\vec{x} \in X_S'$ and $g(\vec{x}) = f(\vec{x})$, then $g$ is robust on $\vec{x}$, we can define a lower bound $\hat{r}$ on robustness as:}
\[
\hat{r}(g,\dtest) = \dfrac{|\{(\vec{z}_i,f(\vec{z}_i)) \in \dtest ~|~ \vec{z} \in X_S' \wedge g(\vec{z}) = f(\vec{z}) \}|}{|\dtest|}
\]

\cla{Similarly, given that if $g$ is robust on $\vec{x}$ and $N(\vec{x}) \subseteq X_S'$, then $g$ is also resilient on $\vec{x}$, we can introduce a lower bound $\hat{R}$ on resilience as:}
\[
\hat{R}(g,\dtest) = \dfrac{|\{(\vec{z}_i,f(\vec{z}_i)) \in \dtest ~|~ N(\vec{z})  \subseteq X_S' \wedge g \textnormal{ is robust on } \vec{z} \}|}{|\dtest|}
\]

\cla{Note that the pre-computation of $X_S'$ makes robustness verification straightforward for all the instances falling in $X_S'$, and that the computation of the resilience estimate $\hat{R}$ can potentially exploit any robustness verification method, which allows one to leverage existing work in the area. Moreover, we may use the accuracy of $\hat{r}$ with respect to the true robustness $r$ (established using an existing robustness verifier) as a proxy for the accuracy of $X_S'$, which allows us to assess the quality of the under-approximation $\hat{R}$ and advocate its adoption to mitigate the false sense of security provided by $r$. We leverage this idea in our experimental evaluation (Section~\ref{sec:experiments}).}

\revise{\subsection{Resilience vs. Global Robustness}}
\label{sec:res-vs-global}

\revise{Recent work proposed a technique to train \emph{globally robust} neural networks, which provide robustness guarantees for all the possible inputs, rather than just for the inputs in the test set~\cite{LeinoWF21}. The idea of globally robust neural networks, generalizable to arbitrary classifiers, is that the set of labels $\labels$ is extended with a special class $\bot$, used to denote that no reliable prediction is possible, because the instance is too close to the decision boundary of the classifier and thus potentially susceptible to evasion attacks. Global robustness requires that any two instances which are close enough to each other are either assigned the same prediction, or at least one of them is flagged as $\bot$. This property is intuitive and desirable, however, contrary to resilience, it cannot be used to verify the security of existing classifiers (which have not been trained to return the $\bot$ label).}

\revise{Rather, note that the proposed approach to resilience verification based on a data-independent stability analysis can be readily applied to transform any classifier into a globally robust classifier. In particular, given any classifier $g$ and a subset of the feature space $X_S' \subseteq \feats$ where $g$ is stable, one can define a globally robust classifier $g'$ as follows:
\[
g'(\vec{x}) =
\begin{cases}
g(\vec{x}) & \text{if } \vec{x} \in X_S' \\
\bot & \text{otherwise}
\end{cases}
\]}

\revise{Observe that the robustness estimate $\hat{r}$ previously defined provides a (local) robustness measure of the globally robust classifier $g'$ obtained through the previous construction.}

\section{Data-Independent Stability Analysis}
\label{sec:analysis}
We present here a data-independent stability analysis for decision trees and decision tree ensembles, which allows one to compute the two measures $\hat{r}$ and $\hat{R}$ defined in Section~\ref{sec:resilience-verification}, thus providing conservative estimates of robustness and (most importantly) resilience. The analysis is proved sound, i.e., we show that the portions of the feature space which are marked as stable by the analysis may only contain instances where the classifier is indeed stable. Proofs are given in Appendix~\ref{sec:proofs}.

\subsection{Preliminaries}
Our analysis leverages \emph{intervals} of real numbers. 
%and operations over them. 
Given $a,b \in \R \cup \{-\infty,+\infty\}$ with $a \leq b$, we use standard notation for intervals, using parentheses to represent open bounds and brackets to represent closed bounds. This leads to four possible types of intervals: $(a,b)$, $[a,b)$, $(a,b]$ and $[a,b]$. We use $I,J$ to range over intervals. 
A \emph{hyper-rectangle} $H \subseteq \feats$ is represented as a vector of intervals $\langle I_1,\ldots,I_d \rangle$ over $\R$. 

Given a decision tree $t$, we define its possible predictions over the hyper-rectangle $H$ as $t(H) = \{y \in \labels ~|~ \exists \vec{x} \in H: t(\vec{x}) = y\}$. Note that computing $t(H)$ is straightforward by means of a recursive tree traversal. Specifically, if $t = \lambda(y)$, then $t(H) = \{y\}$. If instead $t = \sigma(f,v,t_l,t_r)$, we define $t(H)$ with $H = \langle I_1,\ldots,I_d \rangle$ as follows:
\[
t(H) = 
\begin{cases}
t_l(H) & \text{if } I_f \cap (v,+\infty) = \emptyset \\
t_r(H) & \text{if } I_f \cap (-\infty,v] = \emptyset \\
t_l(H) \cup t_r(H) & \text{otherwise}
\end{cases}
\]

Finally, we extend predictions over hyper-rectangles from trees to tree ensembles, noted $T(H)$. The actual definition of $T(H)$ depends on the approach used by $T$ to aggregate the individual tree predictions to produce the ensemble prediction. For example, in the case of majority voting, we may let:
\[
T(H) =
\begin{cases}
\{y\} & \text{if } |\{t_i \in T ~|~ t_i(H) = \{y\}\}| > |T|/2 \\
\labels & \text{otherwise}
\end{cases}
\]

For soundness, we require $\{y \in \labels ~|~ \exists \vec{x} \in H: T(\vec{x}) = y\} \subseteq T(H)$, i.e., the set of the predictions $T(H)$ includes all the predictions that may be assigned to an instance in $H$. Proving that this requirement holds for the definition above is straightforward.

\revisee{In the following, given two intervals, we define their sum $I + J$ as the interval whose lower bound is the sum of the lower bounds and whose upper bound is the sum of the upper bounds; we require the bounds to be closed if and only if both the added bounds are closed. For example, $[1,3] + (4,6] = (5,9]$.
Moreover, given two hyper-rectangles, we define their sum $H + H'$ as the pointwise sum of their components (intervals). }

\subsection{Threat Model}
We assume each feature $f \in \{1,\ldots,d\}$ can be manipulated by paying a \emph{cost} $c_f \in \N$, which allows the attacker to add a perturbation $\delta \in \R$ arbitrarily drawn from an interval $\iatk_f$. We characterize the attacker's power in terms of a \emph{budget} $b$, which determines the maximum aggregate cost that the attacker can pay to manipulate features. We encode robust features which cannot be manipulated by setting $[0,0]$ as their perturbation interval.

\begin{definition}[Adversarial Manipulations]
Given an instance $\vec{x} \in \feats$, we define the set of the \emph{adversarial manipulations}, noted $A(\vec{x})$, as the set of the instances $\vec{z}$ such that there exists a set of features $F \subseteq \{1,\ldots,d\}$ such that:
\begin{enumerate}
\item For all $f \in F$, we have $z_f = x_f + \delta$ for some $\delta \in \iatk_f$.
\item For all $f \not\in F$, we have $z_f = x_f$.
\item We have $\sum_{f \in F} c_f \leq b$.
\end{enumerate}
\end{definition}

This threat model is inspired by traditional distance-based attackers from the adversarial ML literature and it is expressive enough to model a wide range of attacks, including those based on the $L_0$-norm and the $L_\infty$-norm which are traditionally used in prior work~\cite{abs-2004-03295,ChenZS0BH19}. In particular, note that:
\begin{itemize}
    \item $L_0$-norm attackers can be modeled by assuming that each feature can be perturbed in the interval $[-\infty,+\infty]$ by paying cost 1 and by setting the attacker's budget to the maximum $L_0$-distance assumed for the attack.
    \item $L_\infty$-norm attackers can be modeled by assuming that each feature can be perturbed in the interval $[-\delta,+\delta]$ by paying cost 1, where $\delta$ is the maximum $L_\infty$-distance assumed for the attack, and by setting the attacker's budget to the total number of features.
\end{itemize}

By using this threat model, we are able to design a relatively general analysis technique that readily applies to at least these two popular classes of attackers.

\subsection{Stability Analysis for Decision Trees}
Our analysis operates by annotating each node of the decision tree with a set of \emph{symbolic attacks}, which represent a set of instances along with their \emph{relevant} adversarial manipulations. A first insight of our analysis is that most adversarial manipulations are not relevant for the stability analysis of decision trees, because such classifiers operate by means of thresholds, hence only attacks which allow for traversing some threshold might lead to instability~\cite{CalzavaraLT19}. Formally, a symbolic attack $s$ has the shape $\symatk{\ipre_1, \ldots, \ipre_d}{\ipost_1, \ldots, \ipost_d}{k}$, where each $\ipre_i,\ipost_j$ is an interval on $\R$ and $k \in \N$. Intuitively, $s$ identifies the set of instances located within the hyper-rectangle $\langle \ipre_1, \ldots, \ipre_d \rangle$, called the \emph{pre-image} of the symbolic attack, along with their adversarial manipulations located within the hyper-rectangle $\langle \ipost_1, \ldots, \ipost_d \rangle$, called the \emph{post-image} of the symbolic attack; the \emph{cost} of such adversarial manipulations is bounded above by $k$. We make use of symbolic attacks to identify which nodes of the decision tree can be traversed by a set of instances as the result of adversarial manipulations against them; we require that when $k = 0$, the pre-image and the post-image coincide, i.e., the symbolic attack captures the case where no adversarial manipulation has taken place. We write $s.\pre$, $s.\post$ and $s.\paid$ to project the pre-image, the post-image and the cost of $s$ respectively.

Before presenting the formal details, we present the analysis at work on the decision tree of Figure~\ref{fig:tree} built on a feature space with two features. We assume an attacker who can manipulate at most one feature by adding a perturbation $\delta \in [-1,1]$. Formally, this is represented by having $\iatk_1 = \iatk_2 = [-1,1]$, $c_1 = c_2 = 1$ and $b = 1$. The result of the analysis in terms of node annotations are shown in Table~\ref{tab:analysis}. Node 1 is the root of the tree, hence the analysis cannot conclude anything about instances traversing the node, i.e., the symbolic attack in the node annotation models that all instances in the feature space traverse the root, no matter what the attacker does. Node 2, instead, is more interesting, because the analysis captures two cases via two symbolic attacks: an instance $\vec{x}$ might traverse the node either because $x_1 \leq 10$ and the attacker does nothing, or because $x_1 \in (10,11]$ is adversarially manipulated into the interval $(9,10]$. Note that, in the second case, the symbolic attack is assigned cost 1, which allows us to track that no further manipulation is possible because the budget has been exhausted. Even more interesting is the case of node 3, where we have three possibilities. In particular, an instance $\vec{x}$ might traverse the node in the following cases: $(i)$ $x_1 \leq 10$ and $x_2 \leq 5$, hence no adversarial manipulation is needed, $(ii)$ $x_1 \leq 10$ and $x_2 \in (5,6]$ is manipulated into $(4,5]$, or $(iii)$ $x_1 \in (10,11]$ is manipulated into $(9,10]$ and $x_2 \leq 5$. We do not have a case where both features are manipulated, because this would exceed the attacker's budget. A similar reasoning applies to the other nodes in the tree. Once the tree has been annotated, it is possible to check stability by inspecting the annotations in its leaves: we discuss this aspect of the analysis later in the section.

\begin{table}[t]
    \centering
    \caption{Analysis results for the decision tree of Figure~\ref{fig:tree}}
    \label{tab:analysis}
    \begin{tabular}{|l|l|}
    \hline
    1 & $\symatk{(-\infty,+\infty),(-\infty,+\infty)}{(-\infty,+\infty),(-\infty,+\infty)}{0}$ \\
    \hline
    2 & $\symatk{(-\infty,10],(-\infty,+\infty)}{(-\infty,10],(-\infty,+\infty)}{0}$ \\
      & $\symatk{(10,11],(-\infty,+\infty)}{(9,10],(-\infty,+\infty)}{1}$ \\
    \hline
    3 & $\symatk{(-\infty,10],(-\infty,5]}{(-\infty,10],(-\infty,5]}{0}$ \\
      & $\symatk{(-\infty,10],(5,6]}{(-\infty,10],(4,5]}{1}$ \\
      & $\symatk{(10,11],(-\infty,5]}{(9,10],(-\infty,5]}{1}$ \\
    \hline
    4 & $\symatk{(-\infty,10],(5,+\infty)}{(-\infty,10],(5,+\infty)}{0}$ \\
      & $\symatk{(-\infty,10],(4,5]}{(-\infty,10],(5,6]}{1}$ \\
      & $\symatk{(10,11],(5,+\infty)}{(9,10],(5,+\infty)}{1}$ \\
    \hline
    5 & $\symatk{(10,+\infty),(-\infty,+\infty)}{(10,+\infty),(-\infty,+\infty)}{0}$ \\
      & $\symatk{(9,10],(-\infty,+\infty)}{(10,11],(-\infty,+\infty)}{1}$ \\
    \hline
    6 & $\symatk{(10,+\infty),(-\infty,8]}{(10,+\infty),(-\infty,8]}{0}$ \\
      & $\symatk{(10,+\infty),(8,9]}{(10,+\infty),(7,8]}{1}$ \\
      & $\symatk{(9,10],(-\infty,8]}{(10,11],(-\infty,8]}{1}$ \\
    \hline
    7 & $\symatk{(10,+\infty),(8,+\infty)}{(10,+\infty),(8,+\infty)}{0}$ \\
      & $\symatk{(10,+\infty),(7,8]}{(10,+\infty),(8,9]}{1}$ \\
      & $\symatk{(9,10],(8,+\infty)}{(10,11],(8,+\infty)}{1}$ \\
    \hline
    \end{tabular}
\end{table}

Algorithm~\ref{alg:annotate} describes the annotation function for decision trees. We assume each node of the tree is enriched with an attribute $\sym$, used to store a set of symbolic attacks. The call $\Call{Annotate}{t,S}$ annotates the root of $t$ with the set of symbolic attacks $S$ passed as a parameter (line 2), then uses $S$ and the threshold information in the root to generate the annotations for the roots of the left and right sub-trees (lines 6-8); finally, the process goes down recursively (lines 9-10). When the annotation function is initially invoked on the root of the decision tree to analyze, we set $S = \{\symatk{(-\infty,+\infty)^d}{(-\infty,+\infty)^d}{0}\}$ as explained in the example.

\begin{algorithm}[t]
\caption{Decision tree annotation}
\label{alg:annotate}
\begin{algorithmic}[1]
\Function{Annotate}{$t,S$}
    \State $t.\sym \gets S$
    \If{$t = \sigma(f,v,t_l,t_r)$}
        \State $S_l \gets \emptyset$
        \State $S_r \gets \emptyset$
        \For{$s \in S$}
            \State $S_l \gets S_l \cup \Call{RefineLeft}{s,f,v}$
            \State $S_r \gets S_r \cup \Call{RefineRight}{s,f,v}$
        \EndFor
        \State $t_l \gets \Call{Annotate}{t_l,S_l}$
        \State $t_r \gets \Call{Annotate}{t_r,S_r}$
    \EndIf
    \State \Return{$t$}
\EndFunction
\end{algorithmic}
\end{algorithm}

\begin{algorithm}[t]
\caption{Refinement for the left and right sub-trees}
\label{alg:refine}
\begin{algorithmic}[1]
\Function{RefineLeft}{$s,f,v$}
    \State $S \gets \emptyset$
    \State $\langle \ipre_1,\ldots,\ipre_d \rangle \gets s.\pre$
    \State $\langle \ipost_1,\ldots,\ipost_d \rangle \gets s.\post$
    \State $k \gets s.\paid$
    \State $\langle \delta_l,\delta_r \rangle \gets \iatk_f$
    \If{$\ipost_f \cap (-\infty,v] \neq \emptyset$}
        \State $\jpost_f \gets \ipost_f \cap (-\infty,v]$
        \If{$\ipre_f = \ipost_f$}
            \State $\jpre_f \gets \ipre_f \cap (-\infty,v]$
        \Else 
            \State $\jpre_f \gets \ipre_f \cap (-\infty,v-\min(0,\delta_l)]$
        \EndIf
        \State $S \gets S \cup \{\symatk{\ipre_1,\ldots,\ipre_{f-1}, \jpre_f, \ipre_{f+1},\ldots \ipre_d}{\ipost_1,\ldots,\ipost_{f-1}, \jpost_f, \ipost_{f+1}, \ldots, \ipost_d}{k} \}$
    \EndIf
    \If{$\ipre_f = \ipost_f \wedge \delta_l < 0 \wedge \ipre_f \cap (v,v-\delta_l] \neq \emptyset \wedge k + c_f \leq b$}
        \State $\jpost_f \gets \ipost_f \cap (v+\delta_l,v]$
        \State $\jpre_f \gets \ipre_f \cap (v,v-\delta_l]$
        \State $S \gets S \cup \{\symatk{\ipre_1,\ldots,\ipre_{f-1}, \jpre_f, \ipre_{f+1},\ldots \ipre_d}{\ipost_1,\ldots,\ipost_{f-1}, \jpost_f, \ipost_{f+1}, \ldots, \ipost_d}{{k+c_f}} \}$
    \EndIf
    \State \Return{$S$}
\EndFunction

\State 

\Function{RefineRight}{$s,f,v$}
    \State $S \gets \emptyset$
    \State $\langle \ipre_1,\ldots,\ipre_d \rangle \gets s.\pre$
    \State $\langle \ipost_1,\ldots,\ipost_d \rangle \gets s.\post$
    \State $k \gets s.\paid$
    \State $\langle \delta_l,\delta_r \rangle \gets \iatk_f$
    \If{$\ipost_f \cap (v,+\infty) \neq \emptyset$}
        \State $\jpost_f \gets \ipost_f \cap (v,+\infty)$
        \If{$\ipre_f = \ipost_f$}
            \State $\jpre_f \gets \ipre_f \cap (v,+\infty)$
        \Else 
            \State $\jpre_f \gets \ipre_f \cap (v-\max(0,\delta_r),+\infty)$
        \EndIf
        \State $S \gets S \cup \{\symatk{\ipre_1,\ldots,\ipre_{f-1}, \jpre_f, \ipre_{f+1},\ldots \ipre_d}{\ipost_1,\ldots,\ipost_{f-1}, \jpost_f, \ipost_{f+1}, \ldots, \ipost_d}{k} \}$
    \EndIf
    \If{$\ipre_f = \ipost_f \wedge \delta_r > 0 \wedge \ipre_f \cap (v-\delta_r,v] \neq \emptyset \wedge k + c_f \leq b$}
        \State $\jpost_f \gets \ipost_f \cap (v,v+\delta_r]$
        \State $\jpre_f \gets \ipre_f \cap (v-\delta_r,v]$
        \State $S \gets S \cup \{\symatk{\ipre_1,\ldots,\ipre_{f-1}, \jpre_f, \ipre_{f+1},\ldots \ipre_d}{\ipost_1,\ldots,\ipost_{f-1}, \jpost_f, \ipost_{f+1}, \ldots, \ipost_d}{k+c_f} \}$
    \EndIf
    \State \Return{$S$}
\EndFunction
\end{algorithmic}
\end{algorithm}

\begin{algorithm*}[t]
\caption{Stability analysis for decision trees}
\label{alg:stability-tree}
\begin{algorithmic}[1]
\Function{Analyze}{$t$}
    \State $t \gets \Call{Annotate}{t, \{\symatk{(-\infty,+\infty)^d}{(-\infty,+\infty)^d}{0}\}}$
    \State $U \gets \emptyset$
    \For{$\lambda(y) \in t$}
        \For{$s \in \{\hat{s} \in \lambda(y).sym ~|~ \hat{s}.\paid = 0\}$}
            \For{$\lambda'(y') \in t$}
                \If{$y' \neq y$}
                    \For{$s' \in \{\hat{s} \in \lambda'(y').sym ~|~ \hat{s}.\paid > 0\}$}
                        \If{$s.\pre \cap s'.\pre \neq \emptyset$}
                            \State $s''.\pre \gets s.\pre \cap s'.\pre$
                            \State $s''.\post \gets s'.\post \cap (s''.\pre + \langle \iatk_1, \ldots, \iatk_d \rangle)$
                            \State $s''.\paid \gets s'.\paid$
                            \State $U \gets U \cup \{s''\}$
                        \EndIf
                    \EndFor
                \EndIf
            \EndFor
        \EndFor
    \EndFor
    \State \Return{$U$}
\EndFunction
\end{algorithmic}
\end{algorithm*}

The key part of the node annotation logic is implemented by the auxiliary functions {\sc RefineLeft} and {\sc RefineRight}, defined in Algorithm~\ref{alg:refine}. Given a symbolic attack $s$, a feature $f$ and the associated threshold $v$ from an internal node of the decision tree, the call $\Call{RefineLeft}{s,f,v}$ uses $s$ to generate a new set of symbolic attacks $S$ for the root of the left sub-tree (initially empty). Lines 7-13 account for the case where some instances in the post-image of $s$ already fall in the left sub-tree, i.e., the attacker does not need to spend budget to manipulate the feature $f$ so as to push some instances in the pre-image into the left sub-tree. In this case, $S$ is extended with a refined variant of $s$, where we track that the feature $f$ must belong to the interval $(-\infty,v]$ for the instances in the post-image (line 8). Also the pre-image of $s$ is refined in the left sub-tree: if $f$ was not attacked, we know that the feature $f$ must belong to the interval $(-\infty,v]$ for the instances in the pre-image as well (lines 9-10); otherwise, we still know that the attack could not push instances beyond the maximum negative perturbation $\delta_l < 0$, hence the feature $f$  must belong to the interval $(-\infty,v-\min(0,\delta_l)]$ for the instances in the pre-image (lines 11-12). Lines 14-17, instead, cover the case where some instances in the pre-image are close enough to the threshold $v$ to be pushed into the left sub-tree as the result of adversarial manipulations. In this case, provided that the attacker still has enough budget to spend, $S$ is extended with a refined variant of $s$, where we update both the post-image and the pre-image to reflect their proximity to the threshold $v$. More precisely, given the maximum negative perturbation $\delta_l < 0$, we track that the feature $f$ must belong to the interval $(v+\delta_l,v]$ for the instances in the post-image and to the interval $(v,v-\delta_l]$ for the instances in the pre-image, otherwise crossing the threshold would not be possible. The {\sc RefineRight} function performs an analogous reasoning for the right sub-tree, hence we omit a detailed explanation.

The stability analysis for decision trees is finally shown in Algorithm~\ref{alg:stability-tree}. The call $\Call{Analyze}{t}$ leverages the results of the tree annotation function to return a set of symbolic attacks $U$, identifying the portions of the feature space where the decision tree $t$ may be unstable. The function operates by looking for two leaves with different class predictions such that: $(i)$ the first leaf contains a symbolic attack $s$ of cost 0, i.e., no adversarial manipulation was performed on the pre-image of $s$, $(ii)$ the second leaf contains a symbolic attack $s'$ of cost greater than 0, i.e., the attacker manipulated the pre-image of $s'$, and $(iii)$ the pre-images of the two symbolic attacks $s,s'$ partially overlap, i.e., there exist some instances which might fall into a leaf with a different class than the original prediction due to adversarial manipulations (lines 4-9). In this case, the intersection of the pre-images identifies a portion of the feature space where the tree may be unstable and the post-image of $s'$ is refined to capture that the adversarial manipulations cannot push instances beyond the maximum allowed manipulation of the intersection of the two pre-images (lines 10-13). This is a conservative approximation, which accounts for all the possible adversarial manipulations.

To exemplify the output of the stability analysis, consider the node annotations in Table~\ref{tab:analysis}. The stability analysis returns the following symbolic attacks:
\begin{itemize}
    \item For leaf 3: $\symatk{(-\infty,10],(4,5]}{(-\infty,10],(5,6]}{1}$
    \item For leaf 4: $\symatk{(-\infty,10],(5,6]}{(-\infty,10],(4,5]}{1}$ and $\symatk{(9,10],(5,8]}{(10,11],(4,8]}{1}$
    \item For leaf 6: $\symatk{(10,+\infty),(7,8]}{(10,+\infty),(8,9]}{1}$ and $\symatk{(10,11],(5,8]}{(9,10],(5,9]}{1}$
    \item For leaf 7: $\symatk{(10,+\infty),(8,9]}{(10,+\infty),(7,8]}{1}$
\end{itemize}

The pre-images of these symbolic attacks identify the portions of the feature space where the decision tree is unstable. It is interesting to observe that leaves 4 and 6 contribute two portions of the feature space where the tree may be unstable, while leaves 3 and 7 only contribute one. The reason for this is that leaves 4 and 6 partially overlap on the values allowed for the second feature, i.e., the interval $(5,8]$. This means that it is possible to jump from leaf 4 to leaf 6 (and vice-versa) as the result of an attack targeting just the first feature, provided that the second feature falls into $(5,8]$. Conversely, leaves 3 and 7 have no overlap on any of the two features, hence an attack which manipulates just one feature cannot induce a jump between these two leaves. As a final comment, notice that the post-images are not needed at this stage of the analysis: we just collect them because they support the analysis of tree ensembles in the next section.

The soundness theorem for our analysis is given below. The theorem states that all the instances $\vec{x}$ where $t$ is unstable must fall in the pre-image of a symbolic attack contained in the set $U$ returned by the call $\Call{Analyze}{t}$. In other words, the union of the pre-images of the symbolic attacks in $U$ can only over-approximate the portion of the feature space where $t$ is unstable, i.e., $t$ must be stable on all the instances located outside such area. This allows us to compute the set $X_S'$ where $t$ is stable (cf. Section~\ref{sec:resilience-verification}) by subtracting the union of the pre-images of $U$ from the full feature space $\feats$.

\begin{theorem}[Soundness of Tree Analysis]
\label{thm:tree}
The call $\Call{Analyze}{t}$ returns a set of symbolic attacks $U$ such that, for every instance $\vec{x} \in \feats$ and every adversarial manipulation $\vec{z} \in A(\vec{x})$ such that $t(\vec{z}) \neq t(\vec{x})$, there exists $s \in U$ such that $\vec{x} \in s.\pre$ and $\vec{z} \in s.\post$.
\end{theorem}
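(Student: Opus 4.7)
The plan is to reduce the theorem to a refined invariant on the Annotate function and then argue that Analyze preserves this invariant when building $U$. Specifically, I would prove by structural induction on the tree the following \emph{annotation invariant}: for every node $n$ with annotation $n.\sym$ produced by $\Call{Annotate}{t, S_0}$ (where $S_0$ is the initial singleton used in Algorithm~\ref{alg:stability-tree}), and for every pair $(\vec{x}, \vec{z})$ with $\vec{z} \in A(\vec{x})$ such that $\vec{z}$ traverses $n$ during classification by $t$, there exists $s \in n.\sym$ with $\vec{x} \in s.\pre$ and $\vec{z} \in s.\post$. Moreover, if $\vec{z} = \vec{x}$, the witnessing $s$ can be chosen with $s.\paid = 0$; if the path from the root to $n$ forces at least one feature to be manipulated for $\vec{z}$ to be consistent with the traversal, the witnessing $s$ has $s.\paid > 0$.

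The base case is trivial, since $\symatk{(-\infty,+\infty)^d}{(-\infty,+\infty)^d}{0}$ captures every pair. For the inductive step at an internal node $\sigma(f,v,t_l,t_r)$, I would verify that {\sc RefineLeft} (and symmetrically {\sc RefineRight}) transports a witnessing $s$ at the parent into a witnessing $s'$ at the appropriate child. This splits into two sub-cases following the two branches of the refinement code: (i) lines 7--13 handle the case where $\vec{z}$ reaches the left child without a new manipulation of $f$ beyond what was already recorded in $s$, carefully distinguishing whether $f$ had been attacked previously (lines 11--12, where the pre-image must account for the maximal negative perturbation $\delta_l$) or not (lines 9--10, where the pre-image is simply intersected with $(-\infty,v]$); (ii) lines 14--17 handle the case where the attacker freshly manipulates $f$ at this node, contingent on the budget guard $k + c_f \leq b$ and on $s$ having never previously attacked $f$ (captured by $\ipre_f = \ipost_f$).

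With the invariant in place, the theorem follows by direct application. Given $\vec{x}$ and $\vec{z} \in A(\vec{x})$ with $t(\vec{x}) = y \neq y' = t(\vec{z})$, let $\lambda(y)$ be the leaf reached by $\vec{x}$ (trivially, with no attack) and $\lambda'(y')$ the leaf reached by $\vec{z}$. Applying the invariant to $(\vec{x}, \vec{x})$ at $\lambda(y)$ yields $s$ with $s.\paid = 0$, $\vec{x} \in s.\pre$, and $\vec{x} \in s.\post$; applying it to $(\vec{x}, \vec{z})$ at $\lambda'(y')$ yields $s'$ with $\vec{x} \in s'.\pre$, $\vec{z} \in s'.\post$, and $s'.\paid > 0$ (since $y' \neq y$ forces the paths to diverge at some split and hence at least one feature to be manipulated). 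These $s$ and $s'$ satisfy every guard in the nested loops of Analyze, so the constructed $s''$ with $s''.\pre = s.\pre \cap s'.\pre$ is added to $U$. Clearly $\vec{x} \in s''.\pre$, and $\vec{z} \in s'.\post$ combined with $\vec{z} - \vec{x}$ having each component in the respective $\iatk_f$ (assuming $0 \in \iatk_f$ for unmanipulated features, which matches the convention $\iatk_f = [0,0]$ for robust features) places $\vec{z}$ in $s''.\pre + \langle \iatk_1,\ldots,\iatk_d \rangle$, so $\vec{z} \in s''.\post$.

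The main obstacle is the interval-arithmetic bookkeeping in the inductive case for {\sc RefineLeft}/{\sc RefineRight}, particularly ensuring that the pre-image adjustment via $v - \min(0, \delta_l)$ in lines 11--12 precisely characterizes which pre-image instances could still be routed left given a prior manipulation of $f$. A subtler point is confirming that the design choice of forbidding multiple attacks on the same feature (enforced by $\ipre_f = \ipost_f$ in lines 14--17) does not miss any reachable configuration; this requires arguing that any later perturbation on $f$ can be folded into the first, since the attacker pays cost $c_f$ once per feature and the allowed perturbation interval $\iatk_f$ is fixed, so the composition of two perturbations from $\iatk_f$ can always be realized as a single perturbation within a combined window already accounted for.
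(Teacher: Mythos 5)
Your proposal follows essentially the same route as the paper: your annotation invariant is exactly the paper's key lemma on well-annotated trees (a symbolic attack at each traversed node covering $\vec{x}$ in the pre-image, $\vec{z}$ in the post-image, with cost $0$ iff no manipulation is needed), proved by the same induction with the same case split on {\sc RefineLeft}/{\sc RefineRight} distinguishing $\ipre_f = \ipost_f$ from $\ipre_f \neq \ipost_f$, and the final application at the two leaves with a cost-$0$ and a cost-positive witness feeding the guards of {\sc Analyze} matches the paper's argument step for step. The only quibble is your closing worry about ``folding'' two perturbations of the same feature: no such argument is needed, since the threat model applies a single $\delta \in \iatk_f$ per feature, so the case analysis you already plan covers every reachable $(\vec{x},\vec{z})$ pair directly.
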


\subsection{Stability Analysis for Tree Ensembles}
We now discuss how the stability analysis for decision trees can be generalized to tree ensembles by means of an iterative algorithm (Algorithm~\ref{alg:stability-ensemble}). The algorithm operates by refining a set of candidates $C$ where the ensemble $T$ may be unstable, initially set to the union of the symbolic attacks computed for the individual trees $t_i \in T$ (line 2-4). The algorithm also keeps track of an initially empty set of symbolic attacks $E$ where the analysis has ended because no further refinement of them is possible (line 5). 

Each iteration of the algorithm inspects all the candidates $s \in C$, distinguishing three cases (lines 6-15). If $T$ predicts the same label $y$ over both the pre-image and the post-image of $s$, then $T$ is stable on that portion of the feature space and $s$ is removed from $C$ (lines 8-9). Otherwise, the algorithm checks whether the predictions performed by $T$ over the pre-image and the post-image of $s$ share some common elements. If this is the case, then $T$ may be stable on a subset of the pre-image of $s$, yet this cannot be concluded at the current iteration; hence, $s$ is refined by splitting it into a set of smaller symbolic attacks (lines 11-12). Any splitting criterion would work, as long as it satisfies the soundness condition defined in the theorem below. If instead the predictions performed by $T$ over the pre-image and the post-image of $s$ are disjoint, there is no way of proving that $T$ is stable even on a subset of the pre-image of $s$, hence $s$ is moved from $C$ to $E$ to avoid further refinements (lines 14-15). 

The algorithm may implement an arbitrary stopping condition, e.g., $C$ is empty or a maximum number of iterations has been performed, as we do in our implementation. Similarly to the stability analysis for decision trees, the algorithm eventually returns a set of symbolic attacks $C \cup E$, whose union of the pre-images over-approximates the portion of the feature space where $T$ is unstable. The soundness of the analysis is formalized by the following theorem, which is the natural generalization to ensembles of Theorem~\ref{thm:tree}.

\begin{algorithm}[t]
\caption{Stability analysis for tree ensembles}
\label{alg:stability-ensemble}
\begin{algorithmic}[1]
\Function{Analyze}{$T$}
    \State $C \gets \emptyset$
    \For{$t_i \in T$}
        \State $C \gets C \cup \Call{Analyze}{t_i}$
    \EndFor
    \State $E \gets \emptyset$
    \While{stopping condition is not met}
        \For{$s \in C$}
            \If{$\exists y: T(s.\pre) = T(s.\post) = \{y\}$}
                \State $C \gets C \setminus \{s\}$
            \Else
                \If{$T(s.\pre) \cap T(s.\post) \neq \emptyset$}
                    \State $C \gets (C \setminus \{s\}) \cup \Call{Split}{s}$
                \Else
                    \State $C \gets C \setminus \{s\}$
                    \State $E \gets E \cup \{s\}$
                \EndIf
            \EndIf
        \EndFor
    \EndWhile
    \State \Return{$C \cup E$}
\EndFunction
\end{algorithmic}
\end{algorithm}

\begin{theorem}[Soundness of Tree Ensemble Analysis]
\label{thm:ensemble}
Assume the following hypotheses:
\begin{itemize}
    \item $T(H)$ is sound, i.e., it satisfies the following:
    \[
    \{y \in \labels ~|~ \exists \vec{x} \in H: T(\vec{x}) = y\} \subseteq T(H)
    \]

    \item The splitting procedure is sound, i.e., for all symbolic attacks $s$, if there exist $\vec{x} \in \feats$ and $\vec{z} \in A(\vec{x})$ such that $\vec{x} \in s.\pre$ and $\vec{z} \in s.\post$, then there exists $s' \in \Call{Split}{s}$ such that $\vec{x} \in s'.\pre$ and $\vec{z} \in s'.\post$.
\end{itemize}

The call $\Call{Analyze}{T}$ returns a set of symbolic attacks $U$ such that, for every instance $\vec{x} \in \feats$ and every adversarial manipulation $\vec{z} \in A(\vec{x})$ such that $T(\vec{z}) \neq T(\vec{x})$, there exists $s \in U$ such that $\vec{x} \in s.\pre$ and $\vec{z} \in s.\post$.
\end{theorem}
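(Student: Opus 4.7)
The plan is to argue by a loop invariant: at every point in the execution of $\Call{Analyze}{T}$, every ensemble-instability witness $(\vec{x}, \vec{z})$ is captured by some symbolic attack in $C \cup E$. I would first use Theorem~\ref{thm:tree} to establish the invariant at initialization, then show it is preserved by each iteration, and finally read the conclusion off the terminal value of $U = C \cup E$.

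\textbf{Initialization.} Suppose $\vec{x} \in \feats$, $\vec{z} \in A(\vec{x})$, and $T(\vec{z}) \neq T(\vec{x})$. Because any reasonable aggregation scheme (majority voting in particular) is a function of the individual tree predictions, there must exist some $t_i \in T$ with $t_i(\vec{z}) \neq t_i(\vec{x})$; otherwise all trees agree on both inputs and the ensemble predictions would coincide. By Theorem~\ref{thm:tree} applied to $t_i$, there exists $s \in \Call{Analyze}{t_i}$ with $\vec{x} \in s.\pre$ and $\vec{z} \in s.\post$, and lines 2--4 add $s$ to the initial $C$.

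\textbf{Preservation.} On each iteration a symbolic attack $s$ is either removed (lines 8--9), split (lines 11--12), or moved to $E$ (lines 14--15). I would treat each case separately. If $\vec{x} \in s.\pre$ and $\vec{z} \in s.\post$ and the algorithm removes $s$ on the grounds that $T(s.\pre) = T(s.\post) = \{y\}$, then by soundness of $T(H)$ we have $T(\vec{x}) \in T(s.\pre) = \{y\}$ and $T(\vec{z}) \in T(s.\post) = \{y\}$, contradicting $T(\vec{x}) \neq T(\vec{z})$; hence no instability witness is ever discarded by this branch. If $s$ is split, the splitting-soundness hypothesis yields some $s' \in \Call{Split}{s}$ with $\vec{x} \in s'.\pre$ and $\vec{z} \in s'.\post$, and $s'$ is inserted back into $C$. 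If $s$ is moved to $E$, the invariant is preserved trivially since $s \in E \subseteq C \cup E$.

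\textbf{Termination and conclusion.} Whatever stopping condition is used, upon exit the returned set is $U = C \cup E$, and the invariant gives exactly the claimed guarantee. The main technical point I expect to need care with is the first case of the preservation step: it is precisely there that the soundness hypothesis on $T(H)$ is used to lift a sound over-approximation at the symbolic level to a concrete guarantee on $T(\vec{x})$ and $T(\vec{z})$. Beyond that, the argument is a direct lifting of the single-tree case (Theorem~\ref{thm:tree}) to the ensemble level, with the splitting-soundness hypothesis taking the place of the per-tree refinement performed by {\sc RefineLeft}/{\sc RefineRight}.
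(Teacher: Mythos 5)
Your proposal is correct and follows essentially the same route as the paper's proof: the same loop invariant over $C \cup E$, initialization via Theorem~\ref{thm:tree} applied to the tree $t_i$ on which $\vec{x}$ and $\vec{z}$ disagree, and the same three-way case analysis for preservation, with the soundness of $T(H)$ ruling out the removal of a genuine instability witness and the splitting-soundness hypothesis handling the refinement case. No gaps to report.
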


\subsection{Implementation}
We implemented our stability analysis for decision trees and decision tree ensembles (based on majority voting). The output of the stability analysis is used to compute lower bounds of robustness $\hat{r}$ and resilience $\hat{R}$ for a given model and test set, as discussed in Section~\ref{sec:resilience-verification}.

We discuss below selected aspects of the implementation, which is a rather direct translation of our pseudo-code. A first point to note is that the set of candidates $C$ is implemented by means of a min priority queue and only the top $k$ symbolic attacks can be split at each loop iteration ($k = 0.05 \cdot |C|$ by default) to mitigate the growth of $C$. The priority queue is ordered according to the following criterion: each symbolic attack $s$ is first assigned a pair $(n_c,n_u)$, where $n_c$ is a counter used to keep track of how many splits have been performed to produce $s$ and $n_u$ is the number of ``undecided'' trees $t_i$ such that $|t_i(s.\pre)| > 1$; pairs are then ordered according to the standard lexicographic order. In this heuristics, $n_c$ acts as a penalization factor to ensure that the algorithm does not split the same symbolic attacks too many times, but rather tries to process all the symbolic attacks at least once even when the number of iterations is relatively small; symbolic attacks with the same value of $n_c$ are split by prioritizing symbolic attacks with a small number of undecided trees $n_u$, because they are intuitively easier to certify and should be analyzed earlier.

A second relevant aspect to discuss is the implementation of the splitting function. Given the symbolic attack $s$, our implementation of $\Call{Split}{s}$ operates as follows: it first identifies a feature $f$ and a threshold $v$ such that $v$ falls in the $f$-th component of $s.\pre$; then, if $\iatk_f = [\delta_l,\delta_r]$, it splits $s.\pre$ in (at most four) hyper-rectangles based on the thresholds $v + \delta_l$, $v$, $v + \delta_r$. For example given the interval $(a, b]$, if $v$, $v + \delta_l$ and $v + \delta_r$ are inside $(a, b]$, the $\Call{Split}{s}$ function divides the interval in the following four intervals: $(a, v + \delta_l]$, $(v + \delta_l, v]$, $(v, v+ \delta_r]$ and $(v + \delta_r, b]$. Finally, it uses these intervals as the pre-images of the new symbolic attacks, computing the corresponding post-images by intersecting $s.\post$ with the maximum perturbation applicable to the pre-images. It is easy to show that this implementation enjoys the required soundness condition for the splitting procedure.

Finally, we note that Algorithm~\ref{alg:stability-ensemble} can be parallelized by partitioning $C$ across different threads and by joining the analysis results at the end. In particular, we first build the priority queue $C$ and then we distribute it across threads using a round robin algorithm, which is useful to ensure that no thread is penalized by the prevalence of symbolic attacks which are expected to be hard to analyze. This scheme ensures a deeper exploration of $C$ when the analysis terminates before convergence after a fixed number of iterations. Our implementation supports a configurable number of threads.

\section{Experimental Evaluation}
\label{sec:experiments}
We finally report on the experimental evaluation of our resilience verification technique. We first discuss the setup and the research questions, then we present the key experiments and results.

\subsection{Experimental Setup}

We evaluate our proposal using \empirical{three} publicly available datasets from LIBSVM Data\footnote{https://www.csie.ntu.edu.tw/~cjlin/libsvmtools/datasets/}: Diabetes, Cod-RNA and Breast Cancer.
%\footnote{\url{www.csie.ntu.edu.tw/~cjlin/libsvmtools/datasets/binary.html#diabetes}}\footnote{\url{www.csie.ntu.edu.tw/~cjlin/libsvmtools/datasets/binary.html#cod-rna}}\footnote{\url{www.csie.ntu.edu.tw/~cjlin/libsvmtools/datasets/binary/diabetes}}.
Datasets are divided into a training test $\dtrain$ and a test set $\dtest$ by using a 80-20 splitting with stratified random sampling. We normalize each feature in the interval $[0,1]$ and we train different types of classifiers on $\dtrain$, i.e., decision trees of different depths and random forests including different numbers of trees (of depth \empirical{3}). We cover both standard models trained using the popular sklearn library\footnote{\url{https://scikit-learn.org/stable/}} and robust models trained using a state-of-the art adversarial ML algorithm for decision trees, called TREANT~\cite{CalzavaraLTAO20}.

For each classifier, we leverage the test set $\dtest$ to compute different measures: accuracy $a$, robustness $r$, its under-approximation $\hat{r}$, and the under-approximation of resilience $\hat{R}$. The robustness $r$ is computed using an exact verification technique from related work~\cite{abs-2004-03295}, while $\hat{r}$ and $\hat{R}$ are under-approximations computed by our analysis (cf. Section~\ref{sec:resilience-verification}). Similarly to the existing literature, for simplicity we consider a synthetic threat model where each feature can be adversarially corrupted by adding a perturbation $\delta$, which is defined in Table~\ref{tab:perturbs} for the different datasets. \revise{The value of $\delta$ depends on the dataset, because different datasets are drawn from different distributions, hence attacks that are effective on models trained over a given dataset may be too strong or too weak for models trained on a different dataset~\cite{Chen0QLJW21}. The number of features that can be corrupted is defined by the attacker's budget $b$ set in the experiments.}

\begin{table}[]
    \centering
    \caption{Dataset Perturbations}
    \label{tab:perturbs}
    \begin{tabular}{c|c}
    \textbf{Dataset} & \textbf{Perturbation $\delta$} \\
    \hline
    diabetes & 0.03 \\
    cod-rna & 0.07 \\
    breast-cancer & 0.15 
    \end{tabular}
\end{table}

When estimating resilience, we assume the neighborhood $N(\vec{x}) = \{\vec{z} \in \feats ~|~ ||\vec{z} - \vec{x}||_\infty \leq \varepsilon\}$ for a given value of $\varepsilon$ defined in the experiments. Observe that the actual resilience $R$ is unknown and only the estimate $\hat{R}$ can be computed by our analysis. \revise{Our experiments are designed to investigate the following research questions:}
\begin{enumerate}
    \item May the theoretical shortcomings of robustness actually occur in practice?
    
    \item Can we compute accurate resilience estimates by means of our data-independent stability analysis and are these estimates practically useful?
    
    \item What is the impact of the parameter $\varepsilon$ on the resilience estimates that we can compute?
    
    \item What is the performance of our resilience verification technique in terms of running times and how is it affected by the attacker's budget $b$?
\end{enumerate}

\revise{All the experiments are performed using 20 threads.}

\subsection{Shortcomings of Robustness}
\revise{The first experiment we carry out motivates our study by showing that the relevant shortcomings of robustness identified on paper might also occur in practical scenarios. To do that, we use the original test set $\dtest$ of the different datasets to craft 100 synthetic test sets $\dtest^1,\ldots,\dtest^{100}$ obtained by replacing each instance $\vec{x}$ in $\dtest$ with a randomly sampled instance $\vec{z} \in N(\vec{x})$. We then compute the robustness of the trained classifiers over all the test sets $\dtest^i$, reporting the best and worst obtained results to understand to which extent a ``lucky'' sample of the data distribution might give a false sense of security. To ensure that the synthetic test sets are still representative of the same data distribution used for training, we only consider cases where the classifier roughly preserves the same accuracy computed on the original test set $\dtest$.}

\revise{Table~\ref{tab:motivation} presents the experimental results of our evaluation, assuming an attacker with budget $b = 1$. The table reports for the different datasets the worst robustness $r_{min}$ and the best robustness $r_{max}$ computed over all the generated test sets for different values of $\varepsilon$, as well as the corresponding values of accuracy, noted $a_{min}$ and $a_{max}$ respectively. The table also includes the accuracy $a$ and the robustness $r$ computed on the original test set $\dtest$. In the table, we mark in bold the cases where the gap between $r_{min}$ and $r_{max}$ is at least 0.04. The experiments are performed on ensembles with \empirical{7} decision trees.}

\begin{table*}[t]
    \caption{Shortcomings of robustness (for fixed $b = 1$)}
    \label{tab:motivation}
    \centering
    \begin{tabular}{c|c||c|c|c|c|c|c||c|c|c|c|c|c}
    \multicolumn{2}{c}{} & \multicolumn{6}{c}{Standard Models} & \multicolumn{6}{c}{Robust Models} \\
    Dataset & $\varepsilon$ & $a$ & $a_{min}$ & $a_{max}$ & $r$ & $r_{min}$ & $r_{max}$ & $a$ & $a_{min}$ & $a_{max}$ & $r$ & $r_{min}$ & $r_{max}$ \\
    \hline
    \multirow{4}{*}{diabetes}
    & 0.01 &0.714 &0.708 &0.721 &0.649 &0.643 &0.662 &0.727 &0.721 &0.727 &0.714 &0.675 &0.714 \\
    & 0.02 &0.714 &0.708 &0.714 &0.649 &0.630 &0.662 &0.727 &0.714 &0.740 &0.714 &{\bf 0.669} &{\bf 0.721} \\
    & 0.03 &0.714 &0.688 &0.714 &0.649 &{\bf 0.636} &{\bf 0.682} &0.727 &0.721 &0.747 &0.714 &{\bf 0.669} &{\bf 0.727} \\
    & 0.04 &0.714 &0.688 &0.727 &0.649 &{\bf 0.630} &{\bf 0.701} &0.727 &0.708 &0.747 &0.714 &{\bf 0.675} &{\bf 0.734} \\
    \hline
    \multirow{4}{*}{cod-rna}
    & 0.01 & 0.775& 0.774& 0.775& 0.686& 0.676& 0.690& 0.750& 0.748& 0.753& 0.714& 0.710& 0.721 \\
    & 0.02 & 0.775& 0.773& 0.775& 0.686& 0.665& 0.686& 0.750& 0.749& 0.758& 0.714& 0.711& 0.725\\
    & 0.03 & 0.775& 0.773& 0.775& 0.686& 0.657& 0.686& 0.750& 0.750& 0.760& 0.714& 0.705& 0.723 \\
    & 0.04 & 0.775& 0.768& 0.775& 0.686& 0.650& 0.686& 0.750& 0.752& 0.761& 0.714& 0.703& 0.723 \\
    \hline
    \multirow{4}{*}{breast-cancer}
    & 0.05& 0.948& 0.948& 0.948& 0.926& 0.926& 0.941& 0.970& 0.933& 0.963& 0.956& {\bf 0.919}& {\bf 0.963} \\
    & 0.06& 0.948& 0.933& 0.956& 0.926& {\bf 0.911}& {\bf 0.956}& 0.970& 0.933& 0.970& 0.956& {\bf 0.911}& {\bf 0.963} \\
    & 0.07& 0.948& 0.941& 0.956& 0.926& {\bf 0.904}& {\bf 0.956}& 0.970& 0.926& 0.963& 0.956& {\bf 0.911}& {\bf 0.963} \\
    & 0.08& 0.948& 0.933& 0.970& 0.926& {\bf 0.904}& {\bf 0.963}& 0.970& 0.926& 0.956& 0.956& {\bf 0.904}& {\bf 0.956} \\
    \hline
    \end{tabular}
\end{table*}

\revise{The results show that the size of the interval $[r_{min},r_{max}]$ is significant in most cases, typically ranging from 0.04 to 0.07 for the highest values of $\varepsilon$, while the size of the interval $[a_{min},a_{max}]$ is relatively small in comparison, ranging approximately from 0.01 to 0.04. For example, the robustness of the standard model trained over the breast-cancer dataset suffers from a fluctuation of around 0.05 for $\varepsilon = 0.07$, while the corresponding accuracy fluctuates of just 0.01. Our experiments show that robustness is generally more sensitive to small amounts of noise than accuracy. Remarkably, this observation applies to both standard and robust models trained using TREANT. Robust models offer an improved robustness over standard models, however the interval $[r_{min},r_{max}]$ may have a significant size also for them, i.e., roughly 0.06 in the worst case. This shows that a security evaluation based on robustness may give a false sense of security for both standard and robust models. We also remark that our experiment still provides a conservative account of the actual limitations of robustness, being based on just 100 synthetic test sets: the actual gap between $r_{min}$ and $r_{max}$ within the neighborhood $N$ may be larger in practice.}

\subsection{Effectiveness of Resilience Verification}
We now investigate the effectiveness of our resilience verification technique. To do that, we would like to show that our estimate $\hat{R}$ is an accurate under-approximation of the actual resilience $R$ and that resilience significantly mitigates the shortcomings of robustness. Unfortunately, since the actual value of resilience is unknown, we can only provide a best-effort answer to the first point. Our evaluation is based on two independent experiments:
\begin{enumerate}
\item In the first one, we operate by comparing the similarity between the actual robustness $r$ and its estimate $\hat{r}$ computed by our analysis. We consider the similarity between $r$ and $\hat{r}$ as a proxy for the precision of the stability analysis underlying our resilience verification technique: the more $r$ and $\hat{r}$ are close to each other, the more the stability analysis is effective at detecting the portions of the feature space where the classifier is stable, which suggests that also the estimate $\hat{R}$ is precise, being based on the same stability analysis.

\item \revise{In the second one, we refer to the experiment in the previous section and we observe that, if a classifier is not robust on the instance $\vec{z}$ belonging to some $\dtest^i$ with $i \in [1,100]$, then there must exist $\vec{x}$ in $\dtest$ such that the classifier is not robust on at least one instance in $N(\vec{x})$ by construction. This allows one to construct an additional test set $\overline{\dtest}$, corresponding to the ``most unlucky'' sampling in the neighborhood of the original $\dtest$, i.e., the one with the lowest robustness $\overline{r}$. The measure $\overline{r}$ is interesting, because it is based on an exact robustness verification technique: if $\overline{r}$ is close to $\hat{R}$, then we have a proof that most instances where the classifier is not considered resilient by our analysis are indeed insecure with respect to some evasion attacks.}
\end{enumerate}

Note that the second experiment does not just prove the precision of our approximated resilience verification technique, but it also gives a clear security interpretation of the benefits of resilience over robustness.

\begin{table*}[t]
    \caption{Computed measures for different datasets and models (for fixed $b = 1$)}
    \label{tab:results}
    \centering
    \begin{tabular}{c|c|c|c||c|c|c|c|c||c|c|c|c|c}
    \multicolumn{3}{c}{} & \multicolumn{5}{c}{Standard Models} & \multicolumn{5}{c}{Robust Models} \\
    Dataset & $\varepsilon$ & \# Trees & Depth & $a$ & $r$ & $\hat{r}$ & $\overline{r}$ & $\hat{R}$ & $a$ & $r$ & $\hat{r}$ & $\overline{r}$ & $\hat{R}$ \\
    \hline
    \multirow{6}{*}{diabetes} & \multirow{6}{*}{0.01}
      & 1 & 3 & 0.675 & 0.623 & 0.623 & 0.623 & 0.623 & 0.682 & 0.643 & 0.643 & 0.643 & 0.643 \\
    & & 1 & 5 & 0.721 & 0.636 & 0.636 & 0.617 & 0.617 & 0.675 & 0.636 & 0.636 & 0.630 & 0.630 \\
    & & 1 & 7 & 0.727 & {\bf 0.610} & 0.610 & {\bf 0.539} & {\bf 0.539} & 0.695 & 0.675 & 0.675 & 0.636 & 0.636 \\
    \cline{3-14}
    & & 5 & 3 & 0.708 & 0.662 & 0.643 & 0.656 & 0.636 & 0.727 & {\bf 0.714} & 0.701 & {\bf 0.675} & {\bf 0.662} \\
    & & 7 & 3 & 0.714 & 0.649 & 0.630 & 0.636 & 0.623 & 0.727 & {\bf 0.714} & 0.708 & {\bf 0.675} & {\bf 0.662} \\
    & & 9 & 3 & 0.747 & 0.656 & 0.630 & 0.623 & 0.617 & 0.753 & {\bf 0.740} & 0.727 & {\bf 0.695} & {\bf 0.688} \\
    \hline
    \multirow{6}{*}{cod-rna} & \multirow{6}{*}{0.01}
      & 1 & 3 & 0.774 & 0.683 & 0.683 & 0.638 & 0.637 & 0.750 & 0.714 & 0.714 & 0.698 & 0.698 \\
    & & 1 & 5 & 0.874 & {\bf 0.433} & 0.433 & {\bf 0.334} & {\bf 0.330} & 0.810 & {\bf 0.703} & 0.703 & {\bf 0.643} & {\bf 0.641} \\
    & & 1 & 7 & 0.804 & {\bf 0.575} & 0.575 & {\bf 0.474} & {\bf 0.472} & 0.810 & {\bf 0.700} & 0.700 & {\bf 0.637} & {\bf 0.634} \\
    \cline{3-14}
    & & 5 & 3 & 0.775 & {\bf 0.686} & 0.672 & {\bf 0.639} & {\bf 0.621} & 0.752 & 0.715 & 0.707 & 0.698 & 0.691 \\
    & & 7 & 3 & 0.775 & {\bf 0.686} & 0.666 & {\bf 0.640} & {\bf 0.612} & 0.750 & 0.714 & 0.713 & 0.698 & 0.697 \\
    & & 9 & 3 & 0.769 & {\bf 0.677} & 0.663 & {\bf 0.625} & {\bf 0.605} & 0.750 & 0.714 & 0.713 & 0.698 & 0.697 \\
    \hline
    \multirow{6}{*}{breast-cancer} & \multirow{6}{*}{0.05}
      & 1 & 3 & 0.948 & 0.904 & 0.904 & 0.874 & 0.874 & 0.948 & {\bf 0.941} & 0.941 & {\bf 0.859} & {\bf 0.859} \\
    & & 1 & 5 & 0.956 & 0.911 & 0.911 & 0.874 & 0.874 & 0.956 & {\bf 0.941} & 0.941 & {\bf 0.793} & {\bf 0.793} \\
    & & 1 & 7 & 0.948 & 0.904 & 0.904 & 0.867 & 0.867 & 0.956 & {\bf 0.941} & 0.941 & {\bf 0.793} & {\bf 0.793} \\
    \cline{3-14}
    & & 5 & 3 & 0.941 & {\bf 0.926} & 0.904 & {\bf 0.904} & {\bf 0.867} & 0.978 & {\bf 0.948} & 0.941 & {\bf 0.889} & {\bf 0.881} \\
    & & 7 & 3 & 0.948 & 0.926 & 0.911 & 0.926 & 0.881 & 0.970 & {\bf 0.956} & 0.941 & {\bf 0.889} & {\bf 0.874} \\
    & & 9 & 3 & 0.963 & {\bf 0.941} & 0.919 & {\bf 0.933} & {\bf 0.889} & 0.970 & {\bf 0.956} & 0.933 & {\bf 0.889} & {\bf 0.867} \\
    \hline
    \end{tabular}
\end{table*}

Table~\ref{tab:results} shows the experimental results of our evaluation, assuming $b = \empirical{1}$. The results highlight that the estimate $\hat{r}$ is a rather precise under-approximation of the actual robustness $r$: in particular, for individual decision trees $\hat{r}$ always coincides with $r$. As to the tree ensembles, the gap between the two measures increases, yet it is still quite small (roughly 0.02) for standard models and most often negligible for robust models trained using TREANT, hence we expect that also $\hat{R}$ is a reasonably accurate estimate of the actual resilience $R$. The table also shows that the gap between $r$ and $\hat{R}$ may be quite significant, both for standard and robust models, often reaching a value of around \empirical{0.06} and even reaching \empirical{0.10} in some cases. \revise{We highlight in bold the cases where the gap between $r$ and $\hat{R}$ is at least 0.05. Remarkably, it is apparent that the resilience estimate $\hat{R}$ provides a much more realistic security assessment than robustness, because the value of $\overline{r}$ is much closer to $\hat{R}$ than to $r$ in the very large majority of cases. Since $\overline{r}$ captures effective evasion attacks against instances within close neighborhoods of the test set, this confirms that $\hat{R}$ is not overly conservative in practice.}

%\subsection{Role of the Parameters $\varepsilon$ and $b$}
\revise{We finally assess the role of the parameter $\varepsilon$ on our resilience verification technique. For space reasons, we only focus on models trained on the Diabetes dataset using the TREANT algorithm. In particular, we set $b = 1$ and we compute different resilience estimates for different values of $\varepsilon$. Of course, we expect resilience to decrease when increasing the value of $\varepsilon$, because the stability guarantees required on the classifier become more demanding. Still, it is interesting to understand whether the quality of the resilience estimate $\hat{R}$ is affected by the value of $\varepsilon$: to understand this, we compare $\hat{R}$ against $\overline{r}$, because we would like the two measures to be relatively close to each other. Figure~\ref{fig:var-eps} plots how the value of our resilience estimate $\hat{R}$ and $\overline{r}$ decrease when increasing $\varepsilon$ from \empirical{0.01} to \empirical{0.05}. The figure shows that $\hat{R}$ and $\overline{r}$ are consistently close to each other, with a maximum difference of \empirical{0.02}. This shows that the computed resilience estimates always capture possible evasion attacks, i.e., the precision of our approximated analysis does not downgrade when increasing the value of $\varepsilon$.}

\begin{figure}[t]
    \centering
    \includegraphics[scale=0.45]{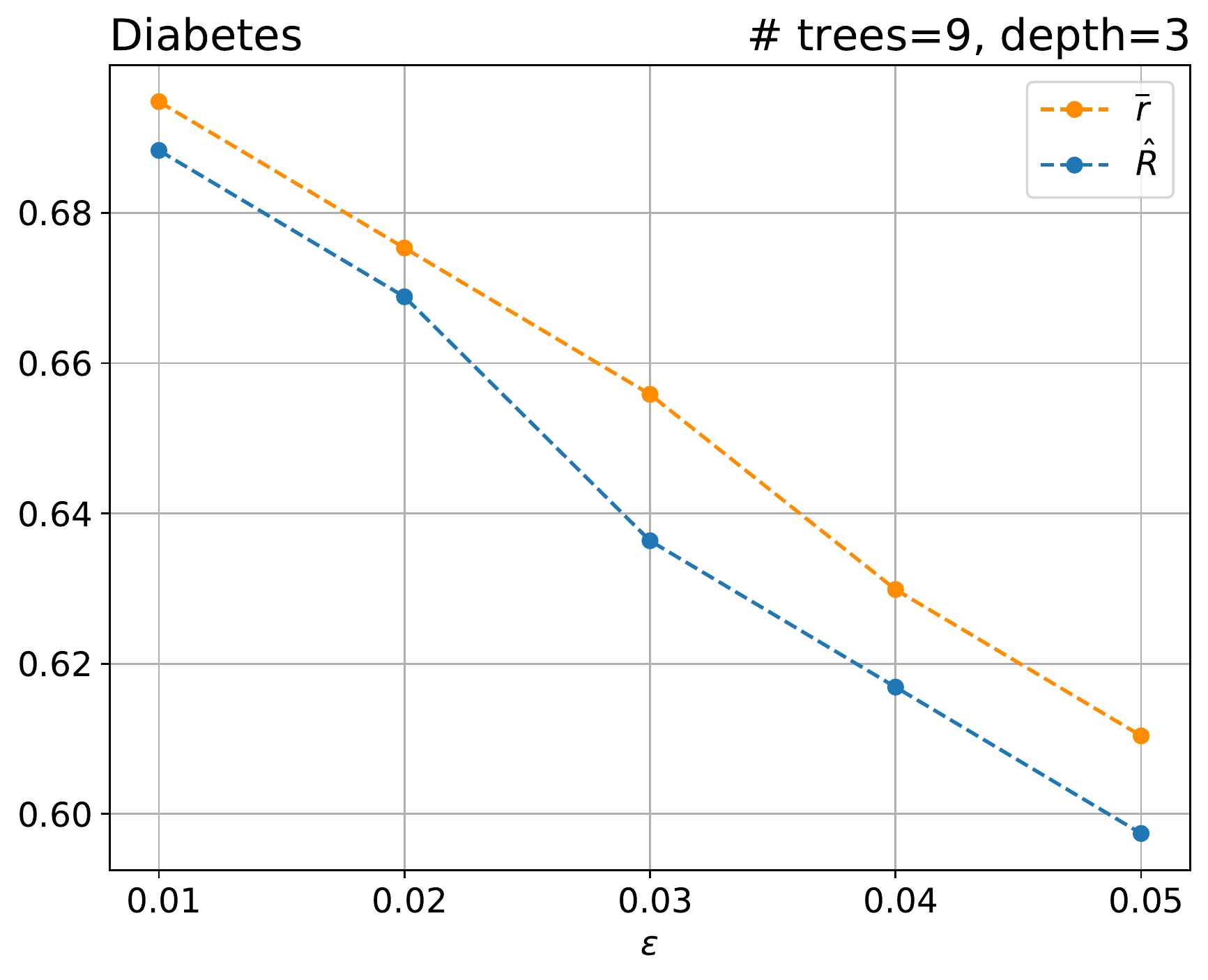}
    \caption{Resilience estimates for different values of $\varepsilon$}
    \label{fig:var-eps}
\end{figure}

% \revise{In the second experiment, we set $\varepsilon = \empirical{0.01}$ and we compute different resilience estimates for different values of $b$. Again, we expect resilience to decrease when increasing the value of $b$, because robustness drops and resilience is bounded above by robustness, however we may assess how much our resilience estimate $\hat{R}$ is influenced by $b$ through a comparison with the trend of $\overline{r}$. Figure~\ref{fig:var-budget} plots how the value of our resilience estimate $\hat{R}$ and $\overline{r}$ decrease when increasing $b$ from \empirical{1} to \empirical{4}, showing that $\hat{R}$ and $\overline{r}$ are still consistently close to each other, with a maximum difference of \empirical{0.02}.}

% \begin{figure}
%     \centering
%     \includegraphics[scale=0.45]{fig/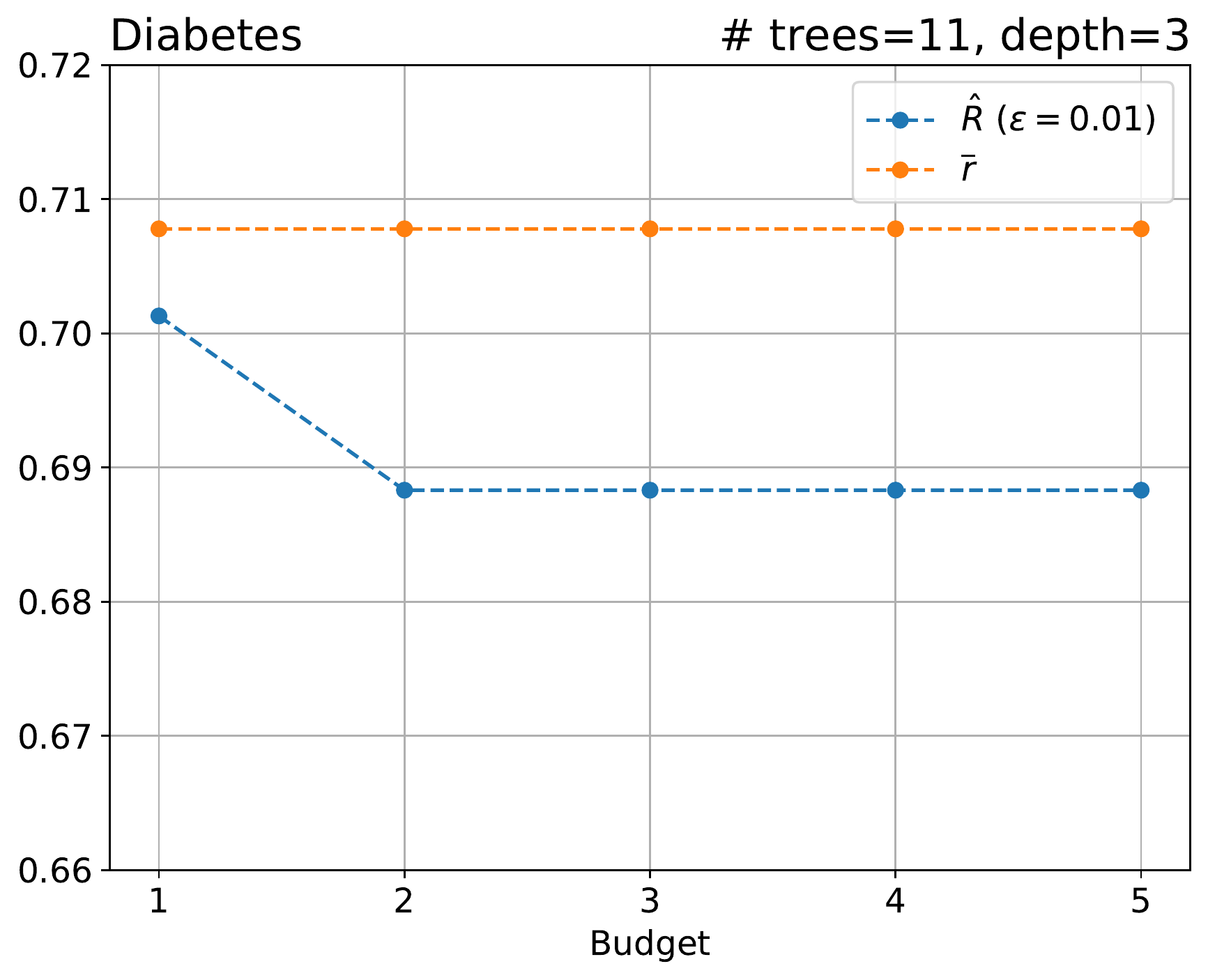}
%     \caption{Resilience estimate for different values of $b$}
%     \label{fig:var-budget}
% \end{figure}

\subsection{Performance Evaluation}
We finally investigate the performance of our resilience verification technique. For space reasons, we only focus on models trained on the Diabetes dataset using the TREANT algorithm. The decision tree analysis is based on a simple tree traversal, hence expected to be very efficient. Our experimental evaluation confirms this intuition: when varying the tree depth from \empirical{3} to \empirical{15}, the analysis always terminates in less than one second. 

The performance of the ensemble analysis is subtler to assess though, because that analysis is based on an iterative algorithm and its performance crucially depends on the number of iterations. We are interested in two aspects here:
\begin{enumerate}
    \item Understanding how much the analysis time (up to convergence) changes when increasing the ensemble size.
    
    \item Understanding how much the quality of the robustness and resilience estimates $\hat{r},\hat{R}$ changes when increasing the number of iterations, while keeping the same ensemble size.
\end{enumerate}

The first point provides insights on the scalability of the analysis to increasingly larger models, while the second point allows one to understand whether it is possible to compute useful robustness and resilience estimates even when the analysis becomes intractable and the number of iterations is limited to forcefully stop the analysis before convergence. Indeed, our analysis was deliberately designed to support iterative refinements and parallelization to be equipped against the exponential complexity blowup underlying the verification of decision tree ensembles~\cite{TornblomN20}.

Figure~\ref{fig:perf-ensembles} shows how the analysis time changes when increasing the size of the ensemble from 9 to 17. Small ensembles with 9 trees can be analyzed in a matter of seconds, while larger ensembles with 17 trees can be analyzed in around \empirical{16} minutes. We consider this result promising, because the stability analysis is data-independent, i.e., it can be computed just once and then applied to establish different properties on different test sets. We expect the analysis times to be further improvable by sacrificing a bit of precision, e.g., by aggregating together symbolic attacks which are close to each other. Our next experiment also provides positive results with respect to the scalability of the analysis to larger ensembles.

\begin{figure}[t]
    \centering
    \includegraphics[scale=0.4]{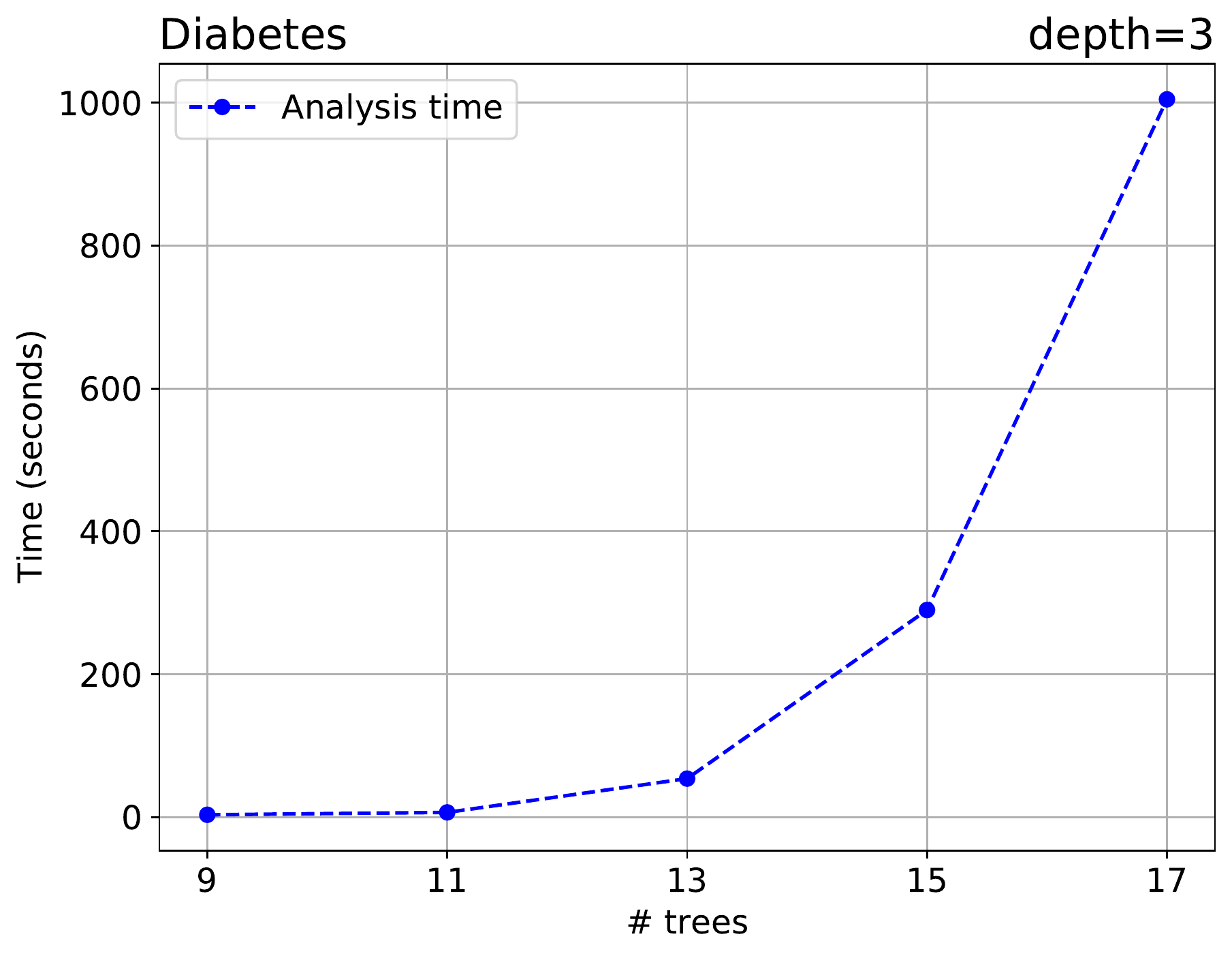}
    \caption{Analysis times when varying the size of the ensemble}
    \label{fig:perf-ensembles}
\end{figure}

Figure~\ref{fig:perf-ensembles-iter} shows how the value of the estimates $\hat{r}$ and $\hat{R}$ computed on an ensemble of \empirical{17} trees change when increasing the number of analysis iterations. The figure shows a desirable trend, with a significant increase of the estimates of robustness and resilience in the first 120 iterations before reaching a plateau. This means that it is possible to establish reasonably accurate estimates of robustness and resilience even with a limited number of iterations of the analysis, which is again important to support scalability, because useful results can be established also before analysis convergence. Indeed, our analysis was designed to prioritize portions of the feature space which are intuitively easier to prove as stable (or not).

\begin{figure}[t]
     \centering
     \includegraphics[scale=0.4]{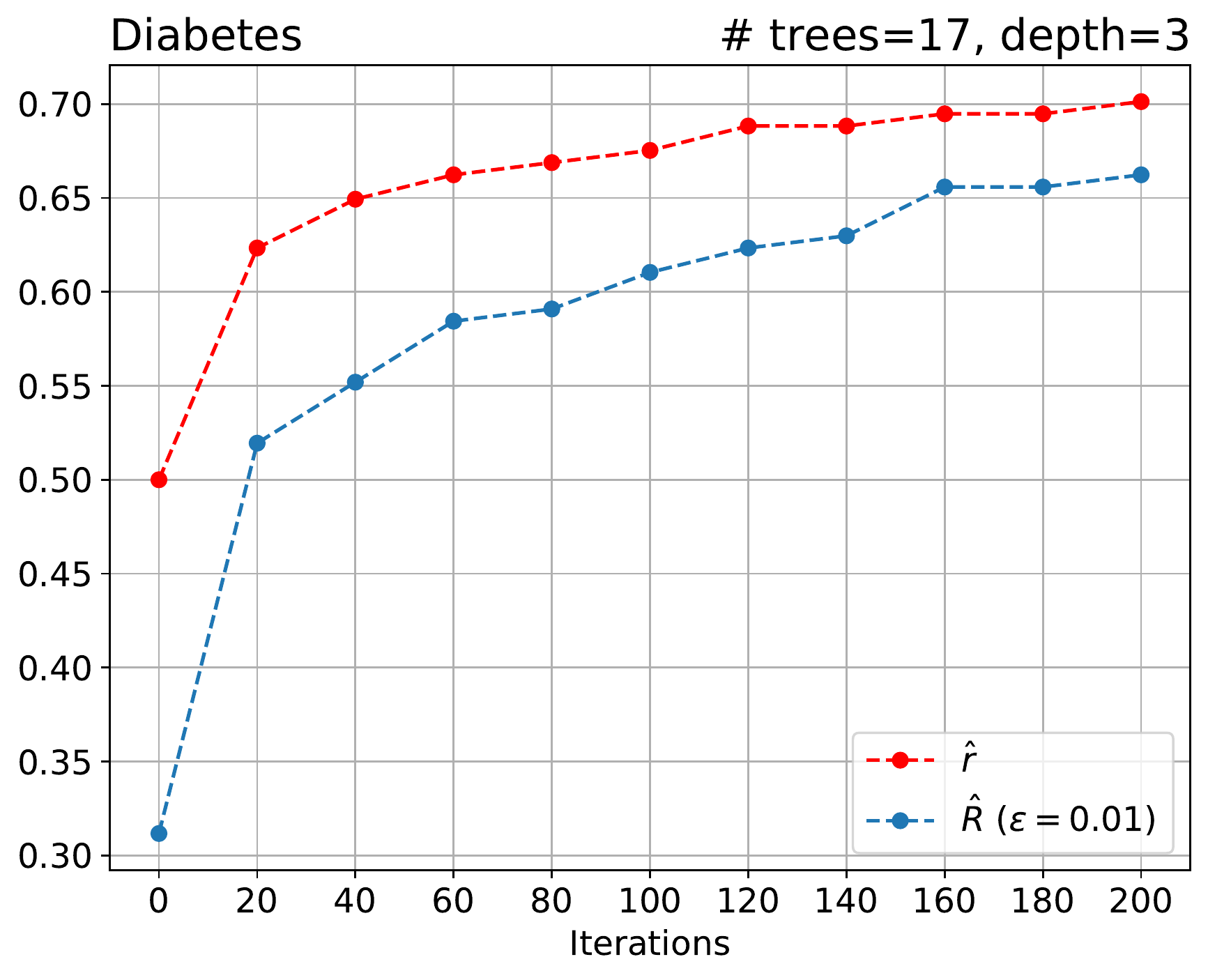}
     \caption{Analysis times when varying the number of analysis iterations}
     \label{fig:perf-ensembles-iter}
\end{figure}

\revise{The last experiment we carry out assesses how the attacker's budget $b$ may impact on the performance of resilience verification. In particular, we investigate how the analysis times change for different values of $b$, using an ensemble of \empirical{11} trees (trained with $b = 5$) analyzed up to convergence. The results of the experiment are shown in Figure~\ref{fig:perf-ensembles-budget}. As we can see, the impact of the attacker's budget on the analysis times is much more limited than the impact of the size of the ensemble. The analysis times just range from around \empirical{12} seconds to around \empirical{40} seconds when varying the attacker's budget from 1 to 5.}

\begin{figure}
    \centering
    \includegraphics[scale=0.4]{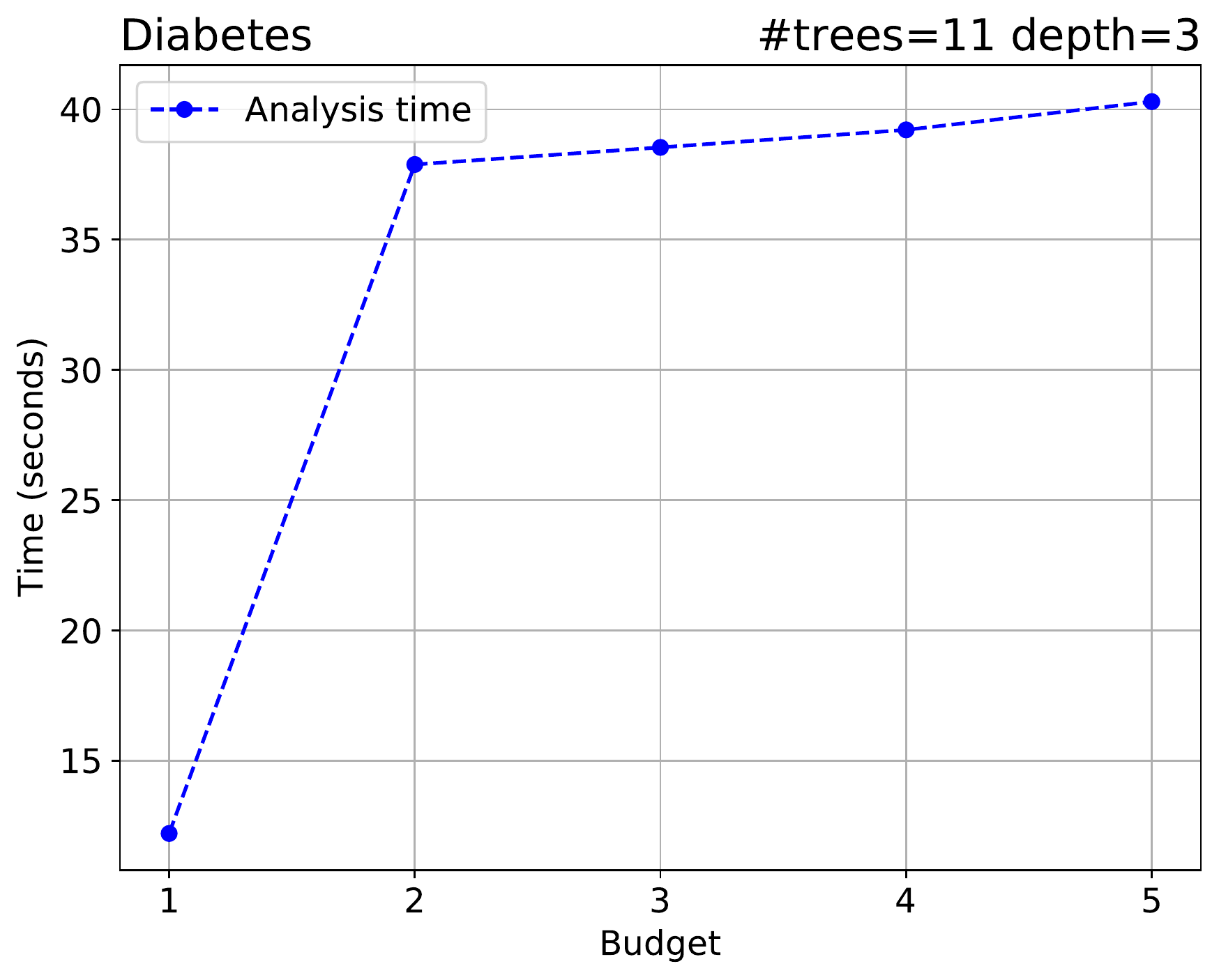}
    \caption{Analysis times when varying the attacker's budget}
    \label{fig:perf-ensembles-budget}
\end{figure}

\subsection{Discussion}
In the end, our experimental analysis yields positive results. We showed that resilience is \emph{useful}, because robustness may give a false sense of security, which is largely mitigated by the use of resilience. We also proved that our under-approximated resilience verification technique is \emph{precise}: we empirically showed that potentially large gaps between robustness and estimated resilience are motivated, because the estimated resilience is close to the robustness measured over the ``most unlucky'' sampling performed in a small neighborhood of the original test set. This confirms that our resilience estimates capture effective evasion attacks against plausible samplings of the same data distribution used to build the original test set.

Finally, we showed that resilience verification is \emph{feasible} in practice, at least for the relatively small models and simple datasets considered in the present work. For larger models, we showed that the iterative refinement process supported by our analysis technique can be leveraged to obtain useful under-approximations of resilience even before analysis convergence. Moreover, since the soundness proofs of our analysis abstract from several implementation details, e.g., the splitting criterion for symbolic attacks,  different heuristics may be tried out to further improve the analysis in terms of both precision and efficiency. We leave a more thorough investigation of this point to future work.

\section{Related Work}
\label{sec:related}

\subsection{Global Robustness}
\revise{Recent independent work in the area also acknowledged the limitations of robustness for the security verification of classifiers~\cite{Chen0QLJW21,LeinoWF21}. Chen \etal{} defined a set of new \emph{global robustness} properties, i.e., universally-quantified statements over one or more inputs to the classifier and its corresponding outputs~\cite{Chen0QLJW21}. They also formalized a data-independent stability definition, that requires any two inputs differing just for the value of a fixed set of features to lead to ``close'' predictions, and proposed a technique to verify this property for a custom type of rule-based classifiers generalizing decision tree ensembles. Although their work shares similarities with ours in terms of research goals, we note several important differences. First, our data-independent stability analysis allows one to identify a subset of the feature space where the classifier is stable, rather than verifying stability over the entire feature space. This is more useful in practice, because stability over the entire feature space is often too strong, essentially requiring that the set of non-robust features is unused for classification. Indeed, contrary to resilience, their global robustness properties entirely abstract from the data distribution, which is a sound yet overly conservative choice in the ML setting. This claim is confirmed by the experimental evaluation in~\cite{Chen0QLJW21}, which shows that stability cannot be verified for any model (standard or robust) besides those deliberately trained by the authors to enforce that property. Rather, we are able to use our resilience notion to perform practically useful security evaluations of existing ML models, while still overcoming the limitations of robustness confirmed by our experiments.}

\revise{Leino \etal, instead, introduced globally-robust neural networks~\cite{LeinoWF21}. They proposed a technique to train neural networks with a special output $\bot$, designed to signal predictions performed on a subset of the feature space that is too close to the decision boundary, hence potentially subject to evasion attacks. Their notion of global robustness requires that any two ``close'' inputs must lead to the same output, unless $\bot$ is returned for at least one of the two inputs. Our data-independent stability analysis essentially captures the same notion, because its output could also be used to return $\bot$ on all the instances that do not fall on a stable subset of the feature space (see Section~\ref{sec:res-vs-global}). However, observe that global robustness as defined in~\cite{LeinoWF21} cannot be used to reason about the security of traditional classifiers which do not use the $\bot$ label and their technical treatment is quite different from ours, because in this paper we operate on decision tree models rather than on neural networks.}

\subsection{Security of Decision Trees}
The security certification of decision trees and decision tree ensembles has received an increasing amount of attention by the research community during the last years. The first seminal work on the topic is due to Kantchelian, Tygar and Joseph~\cite{KantchelianTJ16}. They showed that computing minimal adversarial perturbations for tree ensembles is NP-complete in general, and proposed a mixed-integer linear programming technique for the task. This motivated additional work in the area by Chen \etal~\cite{ChenZS0BH19}. They investigated restricted fragments of the problem which are tractable in polynomial time and proposed an approximated, yet sound, approach to verify robustness against $L_\infty$-norm attackers. In later work, Ranzato and Zanella proposed the use of abstract interpretation to mitigate the complexity of robustness verification by means of a sound over-approximation of the ensemble predictions, again assuming $L_\infty$-norm attackers~\cite{RanzatoZ20}. Calzavara, Ferrara and Lucchese showed that abstract interpretation could also be used to verify the robustness of decision trees against an expressive threat model, where the attacker is encoded as an arbitrary imperative program~\cite{CalzavaraFL20}. All of these approaches only prove robustness and cannot be directly used to prove resilience without a data-independent stability analysis, like the one proposed in the present paper.

A different line of work which is more directly comparable to ours is related to the VoTE checker by T{\"{o}}rnblom and Nadjm{-}Tehrani~\cite{TornblomN20}. Given a tree ensemble, VoTE computes the set of all the equivalence classes induced by the ensemble over the feature space. Once the equivalence classes have been computed, VoTE uses a property checker module to verify different properties on them. The idea of computing equivalence classes from the tree ensemble yields a data-independent analysis approach, however there are important differences with respect to our work. First, their analysis is not adversary-aware and the security implications of data-independence are not explored by the authors, since they just verify traditional robustness properties. We rather clarify the practical relevance of data-independence by introducing a new formal security notion called resilience and we design experiments to show its empirical value on real datasets. Moreover, computing all the equivalence classes of an ensemble is infeasible in general due to their combinatorial explosion, as also observed by the authors of VoTE. In a follow-up work, the same authors proposed an abstraction-refinement approach to mitigate this complexity problem~\cite{TornblomN19}. However, contrary to our analysis, their extension is not proved sound, which is an important requirement for the analysis of classifiers deployed in adversarial settings.

Finally, we mention that several papers discussed new algorithms for training decision trees and decision tree ensembles that are robust to evasion attacks~\cite{CalzavaraLT19,CalzavaraLTAO20,ChenZBH19,Andriushchenko019,VosV21}. These works are complementary to our verification technique, which can be applied to both standard and robust trees, as we discussed in our experimental evaluation.

\section{Conclusion}
We criticized the traditional robustness measure used to assess the security of classifiers against evasion attacks and we proposed an improved measure called resilience, which provides additional assurances on unsampled data outside the test set. We then discussed how resilience can be estimated by combining traditional tools for robustness verification with a data-independent stability analysis, which does not depend on a specific test set. We finally proposed a formally sound data-independent stability analysis for decision trees and decision tree ensembles, which we evaluated on public datasets with positive results. By using our stability analysis, we managed to establish precise and practically useful estimates of resilience within a reasonable amount of time.

We see several interesting directions for future work. First, we would like to extend our analysis to gradient-boosted decision trees~\cite{Friedman01} and more sophisticated threat models, e.g., where adversarial manipulations are expressed in terms of Euclidean distances or via rewriting rules~\cite{CalzavaraFL20}. Moreover, we plan to design new training algorithms based on resilience minimization and a new boosting algorithm for robust tree ensembles based on our stability analysis. Indeed, since our analysis is able to identify the weak spots where the ensemble might be unstable, the boosting algorithm may train additional trees designed to provide stability on that part of the feature space. Finally, we would like to generalize our resilience verification approach to classifiers different than decision trees and decision tree ensembles, e.g., SVMs~\cite{BiggioNL11} and deep neural networks~\cite{SinghGMPV18}.

\bibliographystyle{plain}
\bibliography{main}

\appendix
\section{Proofs}
\label{sec:proofs}
We provide proofs of the formal results in the paper.

\subsection{Proof of Theorem~\ref{thm:tree}}
The proof leverages a key technical lemma (Lemma~\ref{lem:annotation}) formalizing the soundness of the tree annotation function in Algorithm~\ref{alg:annotate}. More specifically, we prove that the tree annotation function always produces a \emph{well-annotated} decision tree according to the following definition.

\begin{definition}[Well-Annotated Decision Tree]
\label{def:well-annotated}
The node $n$ of the decision tree $t$ is \emph{well-annotated} by the set of symbolic attacks $S$ if and only if, for every instance $\vec{x} \in \feats$ and every $\vec{z} \in A(\vec{x})$ such that $n$ is traversed in the prediction $t(\vec{z})$, $S$ contains an element $\symatk{\ipre_1,\ldots,\ipre_d}{\ipost_1,\ldots,\ipost_d}{k}$ such that $\vec{x} \in \langle \ipre_1,\ldots,\ipre_d \rangle$, $\vec{z} \in \langle \ipost_1,\ldots,\ipost_d \rangle$ and $k$ is the minimum cost to pay to make $\vec{x}$ traverse $n$. We say that the decision tree $t$ is well-annotated if and only if all its nodes are well-annotated by the set of symbolic attacks stored in their $\sym$ attribute.
\end{definition}

\begin{lemma}[Soundness of Tree Annotation]
\label{lem:annotation}
The call $\Call{Annotate}{t,S}$ returns a well-annotated decision tree, provided that the root of $t$ is well-annotated by $S$.
\end{lemma}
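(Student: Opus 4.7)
My plan is to prove the lemma by structural induction on the decision tree $t$. The base case, $t = \lambda(y)$, is immediate: the tree consists only of the root, which is well-annotated by $S$ by hypothesis, and line 2 assigns $t.\sym \gets S$, so the returned tree is well-annotated.

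For the inductive case $t = \sigma(f,v,t_l,t_r)$, the algorithm writes $S$ into the root's $\sym$ attribute (preserving the assumed well-annotation of the root) and then recurses on $t_l$ and $t_r$ with the sets $S_l$ and $S_r$ produced by \textsc{RefineLeft} and \textsc{RefineRight}. By the induction hypothesis, it suffices to establish the key sub-claim: if $S$ well-annotates the root of $t$, then $S_l$ well-annotates the root of $t_l$, and symmetrically $S_r$ well-annotates the root of $t_r$. The heart of the argument is therefore the soundness of \textsc{RefineLeft}; \textsc{RefineRight} is handled by a mirror argument.

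To prove the sub-claim for \textsc{RefineLeft}, I fix $\vec{x} \in \feats$ and $\vec{z} \in A(\vec{x})$ such that the root of $t_l$ is traversed in the computation of $t(\vec{z})$. Then $\vec{z}$ traverses the root of $t$ and goes left, so $z_f \leq v$. By the well-annotation hypothesis, there exists $s \in S$ with $\vec{x} \in s.\pre$, $\vec{z} \in s.\post$, and $s.\paid = k$ equal to the minimum cost for $\vec{x}$ to reach the root of $t$. I then proceed by case analysis on whether and where feature $f$ has been attacked in the manipulation $\vec{x} \mapsto \vec{z}$. If $f$ is never attacked, then $s.\pre_f = s.\post_f$ and $x_f = z_f \leq v$, so lines 7--10 produce an $s'$ containing the pair with unchanged cost $k$. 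If $f$ has been attacked earlier on the path to the root of $t$, then $s.\pre_f \neq s.\post_f$, the manipulation of $f$ is already recorded in $s$, and lines 7--8 together with lines 11--12 handle this sub-case; here I verify that $x_f$ lies in the refined interval $\ipre_f \cap (-\infty, v - \min(0, \delta_l)]$ using the arithmetic bound $x_f = z_f - \delta \leq v - \delta_l$ with $\delta \in \iatk_f$. Finally, if $f$ must be attacked precisely at this step to push $\vec{z}$ into the left sub-tree (so $s.\pre_f = s.\post_f$ but $x_f > v$), lines 14--17 generate $s'$ with cost $k + c_f$, with feasibility $k + c_f \leq b$ guaranteed by the admissibility of the overall attack.

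The main technical obstacle will be discharging the interval-arithmetic obligations of each branch, namely that the refined pre-image and post-image of each generated $s'$ really contain $\vec{x}$ and $\vec{z}$, and that the recorded cost coincides with the minimum cost needed to make $\vec{x}$ reach the root of $t_l$. The containment conditions follow from the constraint $z_f = x_f + \delta$ with $\delta \in \iatk_f$ together with the root well-annotation hypothesis; the minimum-cost condition follows because \textsc{RefineLeft} adds $c_f$ to the cost only in the branch where $f$ must be attacked for the first time, while leaving the cost untouched when the current left-transition is already compatible with the manipulations recorded in $s$. Once these checks are in place, the sub-claim holds for \textsc{RefineLeft}, the symmetric argument handles \textsc{RefineRight}, and combining the sub-claim with the inductive hypothesis closes the induction.
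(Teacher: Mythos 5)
Your proposal matches the paper's proof essentially step for step: induction on the tree (the paper inducts on depth, you on structure), reduction to the sub-claim that \textsc{RefineLeft} and \textsc{RefineRight} turn a well-annotating $S$ into well-annotating $S_l$ and $S_r$, and a case analysis on feature $f$ discharged by interval arithmetic on $z_f = x_f + \delta$ with $\delta \in \iatk_f$ (your tracking of the cost as $k$ and $k + c_f$ is in fact slightly more careful than the paper's, which assumes $s.\paid = 0$ at the root). The only cosmetic difference is that the paper keys its four cases directly on whether $\ipre_f = \ipost_f$ and whether $x_f \leq v$, which makes exhaustiveness immediate, whereas your narrative split (``never attacked'' / ``attacked earlier'' / ``must be attacked now'') leaves implicit the configuration where $f$ is perturbed irrelevantly with $x_f \leq v$ and $\ipre_f = \ipost_f$ --- a configuration handled identically to your first case by lines 7--10.
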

\begin{proof}
The proof is by induction on the depth of the tree $t$. If the tree has depth 1, then it includes a single node, i.e., the root, and the conclusion follows by the assumption that the root of $t$ is well-annotated by $S$. Otherwise, we have $t = \sigma(f,v,t_r,t_r)$ for some feature $f$, threshold $v$ and sub-trees $t_l,t_r$. The function then computes two new sets of symbolic attacks $S_l,S_r$ before invoking $\Call{Annotate}{t_l,S_l}$ and $\Call{Annotate}{t_r,S_r}$. Hence, the desired conclusion follows by inductive hypothesis, provided that we are able to show that the roots of $t_l$ and $t_r$ are well-annotated by $S_l$ and $S_r$ respectively. We just prove the former, since the latter uses an equivalent reasoning.

Pick any instance $\vec{x} \in \feats$ and consider any $\vec{z} \in A(x)$, we observe that $S$ must contain an element $s$ such that $\vec{x} \in s.\pre$, $\vec{z} \in s.\post$ and $s.\paid = 0$, because all instances must traverse the root. Assume $z_f \leq v$, we prove that $\Call{RefineLeft}{s,f,v}$ returns a set of symbolic attacks $S_l' \subseteq S_l$ such that there exists $s' \in S_l'$ such that $\vec{x} \in s'.\pre$, $\vec{z} \in s'.\post$ and $s'.\paid$ is the minimum cost to pay to make $\vec{x}$ traverse the left child of the root. Assume $s.\pre = \langle \ipre_1,\ldots,\ipre_d \rangle$, $s.\post = \langle \ipost_1,\ldots,\ipost_d \rangle$ and $\iatk_f = \langle \delta_l,\delta_r \rangle$, we discriminate four cases:
\begin{itemize}
    \item If $\ipre_f = \ipost_f$ and $x_f \leq v$, we leverage the observation that $\vec{z} \in s.\post$ and $z_f \leq v$, hence the condition $\ipost_f \cap (-\infty,v] \neq \emptyset$ at line 6 must be satisfied. In this case, $S_l'$ must contain an $s''$ such that:
    \begin{itemize}
        \item $s''.\pre = \langle \ipre_1,\ldots,\ipre_{f-1},\ipre_f \cap (-\infty,v],\ipre_{f+1},\ldots,\ipre_d \rangle$
        \item $s''.\post = \langle \ipost_1,\ldots,\ipost_{f-1},\ipost_f \cap (-\infty,v],\ipost_{f+1},\ldots,\ipost_d \rangle$
        \item $s''.\paid = 0$
    \end{itemize}
    The conclusion follows by the observation that $s''$ satisfies the three required conditions on $s'$.
    
    \item If $\ipre_f = \ipost_f$ and $x_f > v$, we leverage the observation that $\vec{z} \in A(x)$ and $z_f \leq v$. This implies that $\delta_l < 0$, $x_f \in (v,v-\delta_l]$ and $z_f \in (v+\delta_l,v]$; moreover, we must have $c_f \leq b$. By combining all this information and the observation that $s.\paid = 0$, we conclude that the condition at line 14 must be satisfied. In this case, $S_l'$ must contain an $s''$ such that:
    \begin{itemize}
        \item $s''.\pre = \langle \ipre_1,\ldots,\ipre_{f-1},\ipre_f \cap (v,v-\delta_l],\ipre_{f+1},\ldots,\ipre_d \rangle$
        \item $s''.\post = \langle \ipost_1,\ldots,\ipost_{f-1},\ipost_f \cap (v+\delta_l,v], \ipost_{f+1},\ldots,\ipost_d \rangle$
        \item $s''.\paid = c_f$
    \end{itemize}
    The conclusion follows by the observation that $s''$ satisfies the three required conditions on $s'$.
    
    \item If $\ipre_f \neq \ipost_f$ and $x_f \leq v$, we leverage the observation that $\vec{z} \in s.\post$ and $z_f \leq v$, hence the condition $\ipost_f \cap (-\infty,v] \neq \emptyset$ at line 6 must be satisfied. In this case, $S_l'$ must contain an $s''$ such that:
    \begin{itemize}
        \item $s''.\pre = \langle \ipre_1,\ldots,\ipre_{f-1},\ipre_f \cap (-\infty,v-\min(0,\delta_l)],\ipre_{f+1},\ldots,\ipre_d \rangle$
        \item $s''.\post = \langle \ipost_1,\ldots,\ipost_{f-1},\ipost_f \cap (-\infty,v],\ipost_{f+1},\ldots,\ipost_d \rangle$
        \item $s''.\paid = k$
    \end{itemize}
    The conclusion follows by the observation that $s''$ satisfies the three required conditions on $s'$.
    
    \item If $\ipre_f \neq \ipost_f$ and $x_f > v$, we leverage the observation that $\vec{z} \in A(x)$ and $z_f \leq v$. This implies that $\delta_l < 0$, $x_f \in (v,v-\delta_l]$ and $z_f \in (v+\delta_l,v]$. We then observe that $\vec{z} \in s.\post$ and $z_f \leq v$, hence the condition $\ipost_f \cap (-\infty,v] \neq \emptyset$ at line 6 must be satisfied. In this case, $S_l'$ must contain an $s''$ such that:
    \begin{itemize}
        \item $s''.\pre = \langle \ipre_1,\ldots,\ipre_{f-1},\ipre_f \cap (-\infty,v-\min(0,\delta_l)],\ipre_{f+1},\ldots,\ipre_d \rangle$
        \item $s''.\post = \langle \ipost_1,\ldots,\ipost_{f-1},\ipost_f \cap (-\infty,v],\ipost_{f+1},\ldots,\ipost_d \rangle$
        \item $s''.\paid = k$
    \end{itemize}
    The conclusion follows by the observation that $s''$ satisfies the three required conditions on $s'$.
\end{itemize}
\end{proof}

We now move back to the proof of the theorem. Consider an instance $\vec{x}$ and an adversarial perturbation $\vec{z} \in A(\vec{x})$ such that $t(\vec{z}) \neq t(\vec{x})$. This means that there exist two leaves $\lambda(y)$ and $\lambda'(y')$ with $y \neq y'$ such that $t(\vec{x}) = y$ and $t(\vec{z}) = y'$. By Lemma~\ref{lem:annotation}, $t$ must be well-annotated after line 2, hence we can make the following observations by Definition~\ref{def:well-annotated}:
\begin{enumerate}
    \item Since $\vec{x} \in A(\vec{x})$, the leaf $\lambda(y)$ must contain a symbolic attack $s = \symatk{\ipre_1,\ldots,\ipre_d}{\ipost_1,\ldots,\ipost_d}{k}$ such that $\vec{x} \in \langle \ipre_1,\ldots,\ipre_d \rangle$, $\vec{x} \in \langle \ipost_1,\ldots,\ipost_d \rangle$ and $k = 0$.
    \item Since $\vec{z} \in A(\vec{x})$, the leaf $\lambda'(y')$ must contain a symbolic attack $s' = \symatk{\jpre_1,\ldots,\jpre_d}{\jpost_1,\ldots,\jpost_d}{k'}$ such that $\vec{x} \in \langle \jpre_1,\ldots,\jpre_d \rangle$, $\vec{z} \in \langle \jpost_1,\ldots,\jpost_d \rangle$ and $k'$ is the minimum cost to pay to make $\vec{x}$ traverse $\lambda'(y')$. This cost must be greater than 0, because $t(\vec{z}) \neq t(\vec{x})$ implies $\vec{z} \neq \vec{x}$.
\end{enumerate}

This implies that line 9 is reachable and $s.\pre \cap s'.\pre \neq \emptyset$, hence a new symbolic attack $s''$ is added to the return value $U$ at lines 10-13. Thus, we just need to show that $s''$ satisfies the conditions of the theorem:
\begin{itemize}
    \item We have that $\vec{x} \in s''.\pre = s.\pre \cap s'.\pre$, by points 1 and 2.
    \item We have that $\vec{z} \in s'.\post$ by point 2. Moreover, since $\vec{z} \in A(\vec{x})$, we must have $\vec{z} \in \vec{x} + \langle \iatk_1,\ldots,\iatk_d \rangle$ by definition of adversarial manipulation. Since $\vec{x} \in s''.\pre$ by the previous point, we get $\vec{z} \in s''.\pre + \langle \iatk_1,\ldots,\iatk_d \rangle$, hence we conclude $\vec{z} \in s'.\post \cap (s''.\pre + \langle \iatk_1,\ldots,\iatk_d \rangle) = s''.\post$ as desired.
\end{itemize}

\subsection{Proof of Theorem~\ref{thm:ensemble}}
We prove the following invariant for the outer loop: for every instance $\vec{x} \in \feats$ and every adversarial manipulation $z \in A(\vec{x})$ such that $T(\vec{z}) \neq T(\vec{x})$, there exists $s \in C \cup E$ such that $\vec{x} \in s.\pre$ and $\vec{z} \in s.\post$.

Let $T = \{t_1,\ldots,t_n\}$, consider an instance $\vec{x}$ and an adversarial manipulation $\vec{z} \in A(\vec{x})$ such that $T(\vec{z}) \neq T(\vec{x})$. We first prove the base case, i.e., we show that the invariant holds when no loop iteration has taken place. Initially, $C = \cup_i U_i$ where each $U_i$ is computed by calling $\Call{Analyze}{t_i}$. Since $T(\vec{z}) \neq T(\vec{x})$, there exists $t_i \in T$ such that $t_i(\vec{z}) \neq t_i(\vec{x})$. Hence, there exists $s \in U_i$ such that $\vec{x} \in s.\pre$ and $\vec{z} \in s.\post$ by Theorem~\ref{thm:tree}. The conclusion then follows by definition of $C$.

Assume now the invariant holds up to a given iteration, we show it is preserved at the next iteration. By inductive hypothesis there exists $s \in C \cup E$ such that $\vec{x} \in s.\pre$ and $\vec{z} \in s.\post$. We distinguish two cases. If $s \in E$, then the conclusion is immediate because nothing is ever removed from $E$. If instead $s \in C$, we show that each iteration of the inner loop cannot break the outer loop invariant. In particular, assume some $s' \in C$ is processed by an iteration of the inner loop, leading to updated $C'$ and $E'$ respectively. We can distinguish the following cases at the end of the iteration:
\begin{itemize}
    \item If $C' = C \setminus \{s'\}$ and $E' = E$, then there exists $y$ such that $T(s.\pre) = T(s.\post) = \{y\}$. This implies that for all instances $\vec{w} \in s.\pre \cup s.\post$ we have $T(\vec{w}) = y$, thanks to the first soundness condition. Since $T(\vec{x}) \neq T(\vec{z})$, we have that either $\vec{x} \not\in s.\pre$ or $\vec{z} \not\in s.\post$, hence $s' \neq s$ and the loop invariant is preserved.
    
    \item If $C' = (C \setminus \{s'\}) \cup \Call{Split}{s'}$ and $E' = E$, the loop invariant is preserved by the second soundness condition.
    
    \item if $C' = C \setminus \{s'\}$ and $E' = E \cup \{s'\}$, then $C' \cup E' = C \cup E$ and thus the loop invariant is preserved.
\end{itemize}

%\newpage
%\input{appendix-algo}

\end{document}